%% file: main.tex
\documentclass{article}

\usepackage[preprint]{neurips}

\usepackage[utf8]{inputenc} 
\usepackage[T1]{fontenc}    
\usepackage{hyperref}       
\usepackage{url}            
\usepackage{booktabs}       
\usepackage{amsfonts}       
\usepackage{nicefrac}       
\usepackage{microtype}      
\usepackage{xcolor}         

\usepackage{amssymb}            
\usepackage{mathtools}          
\usepackage{mathrsfs}           
\usepackage{graphicx}           
\usepackage[space]{grffile}     
\usepackage{url}                
\usepackage{algorithm}      
\usepackage{algorithmic}    
\usepackage{amsmath}        
\usepackage{amsthm}
\usepackage{booktabs, caption, subcaption}

\theoremstyle{plain}
\newtheorem{theorem}{Theorem}
\newtheorem{lemma}{Lemma}

\newtheorem{corollary}{Corollary}

\theoremstyle{definition}

\newtheorem{assumption}{Assumption}

\theoremstyle{remark}
\newtheorem{remark}{Remark}
\newcommand{\defeq}{\coloneqq}
\title{FedQHD: Closed-Form Function-Space Federated Reinforcement Learning}

\author{%
  \textbf{Yuchen Hou}\textsuperscript{1} \quad
  \textbf{Yongshan Chen}\textsuperscript{1} \quad
  \textbf{Zhuowen Zou}\textsuperscript{2} \quad
  \textbf{Calvin Yeung}\textsuperscript{2} \\
  \textbf{Mohsen Imani}\textsuperscript{2} \quad
  \textbf{Tian Lan}\textsuperscript{3} \quad
  \textbf{Mahdi Imani}\textsuperscript{1} \\[0.4em]
  \normalsize
  \textsuperscript{1}Northeastern University \quad
  \textsuperscript{2}University of California, Irvine \quad
  \textsuperscript{3}The George Washington University \\
  \texttt{\{hou.yuchen, chen.yongs, m.imani\}@northeastern.edu} \\
  \texttt{\{zhuowez1, chyeung2, m.imani\}@uci.edu} \quad
  \texttt{tlan@gwu.edu}
}

\begin{document}

\maketitle

\begin{abstract}
  Federated reinforcement learning enables decentralized agents to collaboratively improve policies or value estimates without exchanging raw trajectories. However, FedAvg-style parameter averaging is not function-space consistent: when clients use heterogeneous encoders or even identical nonlinear networks, averaged parameters need not correspond to the weighted average of client value functions in any common function space.
We propose \emph{FedQHD}, a federated Q-learning method using hyperdimensional (random-feature) state encoders with a linear readout, so that Q-functions are nonlinear in state yet linear in trainable parameters. This linear structure enables closed-form aggregation. With a shared encoder, the function-space consensus update coincides exactly with weighted averaging of local readout matrices. With heterogeneous encoders, the server constructs a global teacher by averaging client Q-values on a shared anchor-state set, and each client compiles this teacher into its local representation via a single ridge projection. We formalize the \emph{federation gap}---the error incurred when compiling a federated teacher into a heterogeneous client representation---relative to a client-specific oracle projection. We show that this gap decomposes into subspace misalignment, anchor-set conditioning, and regularization bias. We further identify the anchor-to-dimension ratio $m \geq D_i$ as the well-conditioned regime in which the gap reduces to a multiple of the encoder heterogeneity floor. On four continuous-state, discrete-action control benchmarks, FedQHD matches or outperforms FedAvg-style baselines and distillation-based alternatives while requiring substantially less computation, and the empirical
dependence of the federation gap on encoder dimension matches our theoretical analysis.
\end{abstract}

\input{sections/intro}
\input{sections/related_work}
\input{sections/preliminaries}

\input{sections/methodology}

\input{sections/experiments}
\input{sections/conclusion}
\bibliography{main}
\bibliographystyle{plainnat}

\appendix
\input{appendix/exp}
\input{appendix/projection_residual_invisibility}
\input{appendix/encoder_hetero}
\input{appendix/regression_error}

\end{document}

%% file: sections/intro.tex
\section{Introduction}
\label{sec:intro}
Reinforcement learning (RL) systems in autonomous vehicles~\citep{liang2022federated, chellapandi2023federated}, industrial robots~\citep{liu2019lifelong}, and resource-constrained edge devices~\citep{yu2020deep} often learn from on-device interaction data that cannot be centralized due to communication costs, privacy requirements, and the volume of on-device experience. Federated reinforcement learning (FedRL) targets this setting by allowing agents to improve jointly without sharing raw trajectories~\citep{zhuo2020federated, qi2021federated}.


Most FedRL pipelines inherit parameter averaging (FedAvg) from supervised federated learning~\citep{mcmahan2017communication}: clients train locally, the server averages parameters, and the averaged model is broadcast back. However, federated \emph{Q}-learning exposes two structural obstacles. First, weight averaging of nonlinear value networks is not value-function averaging; achieving function-space agreement typically requires additional optimization. Second, practical deployments are structurally heterogeneous: clients may use different encoders, feature dimensions, or architectures, making parameter averaging algebraically undefined~\citep{fan2023fedhql, jiang2025fedhpd}.


The dominant approach to heterogeneous federation is knowledge distillation~\citep{li2019fedmd, lin2021ensembledistillation, jiang2025fedhpd}, which exchanges predictions on shared query states and iteratively trains local students toward an ensembled teacher. Distillation introduces per-round iterative optimization, hyperparameter sensitivity, and instability under the nonstationary Bellman targets of online RL~\citep{czarnecki2019distilling}. We pursue an alternative that remains well-defined under heterogeneous representations without iterative teacher--student training.

Hyperdimensional computing (HDC), and more broadly fixed random-feature value approximation, offers an alternative value representation: states are mapped through a fixed high-dimensional feature map and action values are produced by a linear readout~\citep{kanerva2009hyperdimensional}. This linear readout enables closed-form least-squares-style updates and avoids backpropagation in hyperdimensional Q-learning (QHD)~\citep{ni2022qhd}. The linearity in the trainable parameters also simplifies federation: for linear-in-parameters value functions, averaging in value function space coincides exactly with averaging parameters~\citep{lagoudakis2003least, bhandari2018finite}, and heterogeneous aggregation reduces to a projection step rather than iterative distillation.

We propose \emph{FedQHD}, a federated $Q$-learning framework that aggregates clients through their $Q$-values and remains well-defined under encoder heterogeneity. With a shared encoder, the federated update reduces exactly to weighted averaging of readout matrices, recovering FedAvg in closed form. With heterogeneous encoders, the server forms a teacher by averaging client $Q$-values on a shared \emph{anchor} set, and each client compiles this teacher into its own representation via a single ridge-regression solve per round---without exchanging trajectories and without iterative optimization.
 
Our contributions are:
\begin{itemize}
    \item \textbf{Closed-form federation under heterogeneous encoders.} We propose a closed-form federated $Q$-learning algorithm that handles heterogeneous encoders in a single step, compiling a function-space teacher into each client's local representation via anchor-based ridge regression and recovering FedAvg exactly when encoders are shared.
    \item \textbf{Pointwise bound on the federation gap.} We derive a pointwise bound that decomposes the gap into three interpretable terms---encoder heterogeneity, anchor conditioning, and ridge shrinkage---and identify $m \geq D_i$ as the well-conditioned regime in which the gap reduces to a multiple of the heterogeneity floor.
    \item \textbf{Empirical validation on four continuous-control benchmarks.} We conduct experiments on four continuous-control tasks under both homogeneous and heterogeneous encoders, showing that FedQHD matches or exceeds federated DQN baselines while running substantially faster than distillation-based alternatives, with ablations confirming the predicted dependence on encoder dimension and anchor-set size.
\end{itemize}

%% file: sections/related_work.tex
\section{Related Work}
\label{sec:related_work}

\paragraph{Federated RL with shared parameterizations.}
Federated learning was popularized by FedAvg, which aggregates client models through iterative parameter averaging~\citep{mcmahan2017communication}. Early federated RL systems applied this paradigm by sharing neural value or policy network parameters across agents~\citep{zhuo2020federated,nadiger2019federated}, including applications such as autonomous driving under distribution shift~\citep{liang2022federated} and Byzantine-robust policy gradients~\citep{fan2021fault}. More recent work established finite-time guarantees for federated TD and Q-learning under Markovian sampling~\citep{khodadadian2022federated} and analyzed performance degradation under environment heterogeneity~\citep{jin2022federated}. However, these approaches assume a \emph{shared parameterization} across clients: FedAvg-style aggregation requires identical parameter shapes and is undefined when clients use different encoders or feature dimensions.

\paragraph{Variance reduction and personalization.}
Several works address optimization drift in FedAvg. FedProx~\citep{li2020federated} introduces proximal regularization, while SCAFFOLD~\citep{karimireddy2020scaffold} uses control variates to reduce client variance. Personalized federated learning methods further allow each client to maintain a locally adapted model~\citep{fallah2020personalized}. In contrast, FedQHD eliminates client drift entirely in the homogeneous case (exact aggregation) and handles heterogeneous encoders through a closed-form ridge projection rather than iterative optimization.

\paragraph{Distillation-based federation under heterogeneity.}
Knowledge distillation aggregates models in output space rather than parameter space~\citep{hinton2015distilling}, enabling federation across heterogeneous architectures. In supervised federated learning~\citep{li2019fedmd,lin2021ensembledistillation,zhu2021data,chen2020fedbe}, methods differ in proxy-data assumptions but all rely on iterative gradient-based fitting. In RL, policy distillation~\citep{rusu2016policydistillation} and Distral~\citep{teh2017distral} introduced function-space transfer mechanisms. Recent heterogeneous FedRL approaches adopt similar principles: FedHQL aggregates models through server-side queries~\citep{fan2023fedhql}, SCCD distills ensembles using pseudo-data~\citep{mai2023server}, and FedHPD matches action distributions on shared anchor states~\citep{jiang2025fedhpd}. These approaches require iterative teacher--student optimization and can be sensitive to design and hyperparameter choices, particularly under nonstationary Bellman targets~\citep{czarnecki2019distilling}.

\paragraph{Linear function approximation, kernels, and random features in RL.}
Linear value-function approximation has long provided stable and analyzable RL algorithms. Least-squares approaches such as LSPI and fitted Q-iteration formulate Bellman updates as regression problems with closed-form solutions~\citep{lagoudakis2003least,ernst2005tree}, while finite-time guarantees for linear TD have been established under both i.i.d.\ and Markovian sampling~\citep{bhandari2018finite}. Kernel and basis-function methods extend this framework to nonlinear state representations while retaining linear parameter structure~\citep{ormoneit2002kernel,konidaris2011value}. Random Fourier features provide scalable kernel approximations~\citep{rahimi2007random}, and regret analyses connect reproducing kernel Hilbert space (RKHS) geometry to RL sample complexity~\citep{jin2020provably}. HDC~\citep{kanerva2009hyperdimensional} can be viewed as a high-dimensional random-feature instantiation; QHD and HDPG demonstrate that HDC encoders enable efficient RL with linear readouts and least-squares-style updates~\citep{ni2022qhd,ni2022hdpg}.

\paragraph{Positioning of FedQHD.}
FedQHD addresses \emph{structural heterogeneity} in federated Q-learning, where clients may use different encoders and parameter dimensions and parameter averaging becomes ill-defined~\citep{fan2023fedhql,jiang2025fedhpd}. Instead of iterative distillation, FedQHD aggregates Q-values on a shared anchor-state interface and compiles the resulting consensus into each client representation via a one-shot ridge projection. In the homogeneous limit (shared encoder), this procedure reduces exactly to parameter averaging, connecting classical federated learning with heterogeneous value-function aggregation.

%% file: sections/preliminaries.tex
\section{Preliminaries}
\label{sec:preliminaries}

\subsection{Markov decision processes and off-policy value learning}
We consider a Markov decision process $\mathcal{M} = (\mathcal{S}, \mathcal{A}, P, r, \gamma)$, where $\mathcal{S}$ is a continuous state space, $\mathcal{A}$ is a finite action set, $P:\mathcal{S}\times\mathcal{A}\!\to\!\Delta(\mathcal{S})$ is the transition kernel, $r:\mathcal{S}\times\mathcal{A}\!\to\!\mathbb{R}$ is the reward function, and $\gamma\in(0,1)$ is the discount factor. The optimal action-value function $Q^\star$ is the unique fixed point of the Bellman optimality operator
\[
Q^\star(s,a)= r(s,a) + \gamma\,\mathbb{E}_{s'\sim P(\cdot|s,a)}
\left[\max_{a'\in\mathcal{A}} Q^\star(s',a')\right],
\]
with optimal policy $\pi^\star(s)=\arg\max_a Q^\star(s,a)$. Since $\mathcal{S}$ is continuous, $Q^\star$ is approximated using standard off-policy temporal-difference framework: transitions $(s,a,r,s')$ are stored in a replay buffer and $Q^\star$ is estimated by semi-gradient updates against a periodically frozen target network~\citep{ni2022qhd}.



\subsection{Hyperdimensional computing (HDC)}

\label{subsec:hdc_background}

HDC is a brain-inspired computational paradigm in which symbols and structured entities are represented as high-dimensional vectors---called \emph{hypervectors}---with components drawn independently from simple distributions~\citep{kanerva2009hyperdimensional}. In such spaces, independently sampled hypervectors are nearly orthogonal with high probability, a geometric property that underlies classical HDC operations such as \emph{bundling} (superposition via addition), \emph{binding} (association via elementwise multiplication or permutation), and \emph{permutation} (role shifting). Because information is distributed holographically across all dimensions, HDC representations exhibit strong robustness to noise, quantization, and partial corruption.

Given an input $x$, an HDC encoder produces a bounded hypervector $\phi(x)\in\mathbb{K}^D$ using random projections or compositional schemes built from a small set of base hypervectors. These encoders admit a precise random-feature interpretation: the empirical kernel $k_D(x,x')=\langle \phi(x),\phi(x')\rangle$ 
converges to a smooth limiting kernel $k_\ast(x,x')$ as $D\to\infty$, with uniform concentration at rate $O(D^{-1/2})$ \citep{rahimi2007random,bach2015equivalence,rudi2017generalization}. As a consequence, linear prediction in the hypervector domain serves as a computationally efficient, finite-dimensional approximation to kernel methods in the RKHS associated with $k_\ast$.





%% file: sections/methodology.tex
\section{Problem Formulation}
\label{sec:problem}

\paragraph{QHD: hyperdimensional Q-learning for continuous-state control.}
We consider an MDP for each agent $i$ with continuous states $s\in\mathcal{S}$ and discrete actions $a\in\mathcal{A}$.  Agent $i$ selects a hyperdimensional encoder $
\Phi_i:\mathcal{S}\to\mathbb{K}^{D_i},\quad \mathbb{K}\in\{\mathbb{R},\mathbb{C}\}$
mapping states to $D_i$-dimensional hypervectors.  For each action $a$, it maintains a parameter hypervector $\mathbf w_{i,a}\in\mathbb{K}^{D_i}$. We use $\langle \cdot, \cdot \rangle$
to denote the standard Hermitian inner product on $\mathbb{K}^{D_i}$,
$\langle u, v \rangle = u^{\mathrm{H}} v = \sum_{k=1}^{D_i} \overline{u_k}\,v_k$,
and we write $\Re(z)$ for the real part of $z \in \mathbb{C}$, with the
convention $\Re(z) = z$ when $z \in \mathbb{R}$. The approximated action-value is 
\begin{equation}
Q_i(s,a)\;=\;\Re\!\bigl(\langle \Phi_i(s), \mathbf w_{i,a}\rangle\bigr)\,, 
\label{eq:qhd_q}
\end{equation}
so that stacking $W_i=[\mathbf w_{i,a}]_{a\in\mathcal{A}}\in\mathbb{K}^{D_i\times|\mathcal{A}|}$ yields 
$Q_i(s,\cdot)=\Re(\Phi_i(s)^{\mathrm H}W_i)$.  Although $\Phi_i$ may be nonlinear in $s$, the model \eqref{eq:qhd_q} is linear in parameters. QHD updates $W_i$ via standard off-policy TD with a delayed target network~\citep{ni2022qhd}; the explicit semi-gradient update rule is given in Appendix~\ref{app:qhd_update}. 


\paragraph{Federated value learning objective.}  We consider $N$ agents (clients) learning in parallel from private data. They do not share raw data; instead, after local training, each client $i$ has learned a Q-function $Q_i(s,a)=\Re(\Phi_i(s)^{\mathrm H}W_i)$.  The server defines a global \emph{function-space} objective by aggregating these models under an agreement distribution $\mu$ (implemented via a shared anchor set; see below).  

Concretely, given weights $\{\pi_i\}$ with $\sum_i\pi_i=1$, the server seeks the global value function $Q^{\mathrm{glob}}$ that minimizes 
\begin{equation}\label{eq:global}
\sum_{i=1}^N \pi_i\,\mathbb{E}_{(s,a)\sim\mu}\bigl[(Q_i(s,a)-Q(s,a))^2\bigr].
\end{equation}
By standard projection arguments, the minimizer is the weighted average in function space:
\begin{equation}
Q^{\text{glob}}(s,a)=\sum_i \pi_i Q_i(s,a), \quad (\mu\text{-a.e.}).
\label{eq:qglob_avg}
\end{equation}
Here $\mu$ denotes the agreement distribution over state-action pairs; in practice, we implement $\mu$ via a discrete anchor set.
The server thus obtains a global action-value model in closed form without accessing client trajectories.
The problem is to compile this function-space teacher into a parameter vector for each client’s representation.


Although Eq.~\eqref{eq:global} is defined in function space, FedAvg-style aggregation either requires iterative optimization (for nonlinear approximators) or becomes algebraically undefined (under heterogeneous encoders). FedQHD resolves both by exploiting the linearity in parameters of \eqref{eq:qhd_q}.

\section{FedQHD: Function-Space Federation with Closed-Form Compilation}
\label{sec:method}
We present FedQHD in two regimes. When clients share an encoder, the federation step is exact in closed form. When clients use heterogeneous encoders, the server aggregates client predictions on a shared anchor-state set, and each client compiles the resulting teacher into its local feature space via a closed-form ridge solve. 
%

\subsection{Homogeneous encoders}
\label{sec:homo_fedqhd}
If all clients share an encoder $\Phi$, substituting into \eqref{eq:qglob_avg} gives the global model $Q^{\mathrm{glob}}(s,a)=\Re\bigl(\Phi(s)^\mathrm{H}W^{\mathrm{glob}}\bigr)$, where $W^{\mathrm{glob}}\in\mathbb{K}^{D\times|\mathcal{A}|}$ is the minimizer of a quadratic objective $\sum_{i=1}^N \pi_i E_{(s,a)\sim\mu} \big[ \langle \Phi(s), (W_i -W^{\mathrm{glob}})_{:,a} \rangle^2\big]$. The closed-form solution is:
\[
W^{\mathrm{glob}} \;=\; \sum_{i=1}^N \pi_i W_i.
\]
That is, when $\Phi_i=\Phi$, function-space consensus coincides exactly with parameter averaging, which is independent of $\mu$.  The server simply computes the weighted average of the local weight matrices without requiring raw data and sends $W^{\mathrm{glob}}$ back to the clients as their updated parameters. The homogeneous FedQHD procedure is presented as pseudocode in Appendix~\ref{app:alg}.

\subsection{Heterogeneous encoders} 
\label{sec:hetero_fedqhd}

When clients use different encoders $\Phi_i$ with possibly different dimensions $D_i$, direct parameter averaging is not defined. Instead, we align client representations through a \emph{shared anchor set}. The server samples a set of reference states $\mathcal{S}_{\mathrm{ref}}=\{s_1,\dots,s_m\}$. The anchor set can be obtained from random rollouts, a shared unlabeled dataset, or states encountered during local training; in our experiments we use random rollouts.

Each client $i$ evaluates its Q-function on the anchors,
$Q^{\mathrm{ref}}_i(s_j,a)=Q_i(s_j,a)$, forming a matrix $Q^{\mathrm{ref}}_i\in\mathbb{R}^{m\times|\mathcal{A}|}$. The server aggregates these predictions to obtain the anchor teacher $Q^{\mathrm{glob}}_{\mathrm{ref}}
=\sum_{i=1}^N \pi_i Q^{\mathrm{ref}}_i.$ Client $i$ then fits parameters $W_i^{\mathrm{glob}}$ by solving $\min_{W\in\mathbb{K}^{D_i\times|\mathcal{A}|}}
\sum_{j=1}^m\sum_{a\in\mathcal{A}}
\Bigl(\Re(\Phi_i(s_j)^{H}w_a)-Q^{\mathrm{glob}}_{\mathrm{ref}}(s_j,a)\Bigr)^2
+\lambda\|W\|_F^2.$ Since the model is linear in $W$, the solution has the closed form
\[
W^{\mathrm{glob}}_i=
\left(X_i^{H}X_i+\lambda I_{D_i}\right)^{-1}X_i^{H}Q^{\mathrm{glob}}_{\mathrm{ref}},
\]
when $\lambda>0$ or $X_i^{H}X_i$ is full rank. 
This corresponds to ridge regression projecting the global teacher onto client $i$’s feature space. After updating to $W_i^{\mathrm{glob}}$, the client resumes local RL updates until the next federation round. Equivalently, using the Woodbury identity, $W_i^{\mathrm{glob}} = X_i^H(G_i + \lambda I_m)^{-1} Q_{\mathrm{ref}}^{\mathrm{glob}}$, which requires inverting an $m\times m$ matrix rather than $D_i\times D_i$. Algorithm~\ref{alg:fedqhd_heterogeneous} is provided in Appendix~\ref{app:alg}.



As analyzed in Sec.~\ref{sec:theory}, the resulting compilation error depends on representation mismatch, anchor conditioning, and regularization.

\section{Theoretical Analysis}
\label{sec:theory}
We analyze the static error induced when a federated teacher is compiled into a client-specific linear representation. The results characterize representation/compilation error conditional on fixed local predictors; they are not a convergence theorem for the full online FedQHD training dynamics.


\subsection{Federated Representation Mismatch}
\label{subsec:fed_gap}
Let $Q^*$ denote the true optimal action-value function. For each client $i$, define its \emph{oracle projection} $\hat Q_i$ as the best possible approximation to $Q^*$ within its function class:
\[
\hat Q_i(s,a) \;\defeq\; \arg\min_{Q\in\mathcal{F}_i} \mathbb{E}_{(s,a)\sim\mu} \bigl(Q(s,a) - Q^*(s,a)\bigr)^2,
\]
where $\mathcal{F}_i \defeq \{Q_i(s,a)=\Re(\Phi_i(s)^\mathrm{H} W_i) : W_i\in\mathbb{K}^{D_i\times|\mathcal{A}|}\}$ is the client’s QHD function class. In linear approximation theory, this is the orthogonal projection of $Q^*$ onto the span of $\Phi_i$, i.e., the best representation of the true value function that client $i$ can achieve given its encoder.

Equivalently, client $i$’s oracle parameters $\hat W_i$ minimize the mean-squared Bellman error for policy evaluation.  We then define the federation gap
\begin{equation}
\Delta_i(s,a) \;\defeq\; \hat Q_i(s,a) \;-\; Q_i\bigl(s,a;\,W_i^\mathrm{glob}\bigr),
\label{eq:gap_def}
\end{equation}
where $W_i^\mathrm{glob}$ is the global-compiled weight returned to the client $i$.  Thus $\Delta_i$ measures how far FedQHD’s aggregated model falls short of the best possible value in client $i$’s function class.


\subsection{Projection Residual Invisibility in Function Space}

Even if the global teacher $Q^{\mathrm{glob}}_{\mathrm{ref}}$ aggregates 
knowledge from all clients, only the portion representable within client 
$i$'s function class is absorbed during compilation. Formally, let 
$P_i$ denote the orthogonal projector onto the subspace spanned by 
client $i$'s anchor evaluations, and define the \emph{projection 
residual} $R_{i,0} = (I - P_i)Q^{\mathrm{glob}}_{\mathrm{ref}}$ as the 
component of the global teacher that lies outside this subspace.
The following result shows that the ridge solve automatically discards the unrepresentable portion of the teacher.

\begin{theorem}[Projection residual invisibility]
\label{thm:invisibility}
For any $\lambda \geq 0$, the compiled Q-function depends only on the 
in-subspace component of the global teacher:
\[
    Q_i\bigl(s,a;\,W_i^{\mathrm{glob}}(\lambda)\bigr)
    \;=\;
    \mathbf{k}_i(s)^{\top}
    (G_i + \lambda I)^{-1}
    P_i\,Q^{\mathrm{glob}}_{\mathrm{ref},a}
    \qquad \forall\, s \in \mathcal{S},\; a \in \mathcal{A},
\]
where $[\mathbf{k}_i(s)]_\ell = \langle \Phi_i(s_\ell), \Phi_i(s) \rangle$ 
for $\ell = 1, \ldots, m$, and $G_i \in \mathbb{K}^{m 
\times m}$ is the anchor Gram matrix with $[G_i]_{\ell j} = \langle 
\Phi_i(s_\ell), \Phi_i(s_j) \rangle$. 
Hence the projection residual is invisible: $\mathbf{k}_i(s)^{\top}(G_i + \lambda I)^{-1} R_{i,0,a} = 0$.

\end{theorem}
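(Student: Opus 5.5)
The plan is to rewrite the compiled Q-function in kernel (dual) form and then use the fact that the range of $G_i$ is exactly the range of the projector $P_i$.

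Let $X_i\in\mathbb{K}^{m\times D_i}$ denote the anchor feature matrix whose $\ell$-th row is $\Phi_i(s_\ell)^{\mathrm H}$. Then $G_i = X_iX_i^{\mathrm H}$, and $\mathbf{k}_i(s) = X_i\Phi_i(s)$ up to the conjugation convention used for the readout. We interpret $P_i$ as the orthogonal projector onto $\operatorname{range}(X_i)=\operatorname{range}(G_i)\subseteq\mathbb{K}^m$, the span of the anchor evaluations. The real-part operation and the real-valuedness of the teacher are carried along as in the compilation step; since the teacher is real we treat each column $Q^{\mathrm{glob}}_{\mathrm{ref},a}$ as a vector in $\mathbb{K}^m$.

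\textbf{Step 1 (dual form).} For $\lambda>0$, the Woodbury identity already noted in Sec.~\ref{sec:hetero_fedqhd} gives $W_i^{\mathrm{glob}} = X_i^{\mathrm H}(G_i+\lambda I_m)^{-1}Q^{\mathrm{glob}}_{\mathrm{ref}}$. Hence
\[
Q_i\bigl(s,a;W_i^{\mathrm{glob}}\bigr) = \Phi_i(s)^{\mathrm H}X_i^{\mathrm H}(G_i+\lambda I)^{-1}Q^{\mathrm{glob}}_{\mathrm{ref},a} = \mathbf{k}_i(s)^{\top}(G_i+\lambda I)^{-1}Q^{\mathrm{glob}}_{\mathrm{ref},a}.
\]
This is the push-through identity $(X^{\mathrm H}X+\lambda I)^{-1}X^{\mathrm H}=X^{\mathrm H}(XX^{\mathrm H}+\lambda I)^{-1}$.

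\textbf{Step 2 (invisibility).} It remains to show that $\mathbf{k}_i(s)^{\top}(G_i+\lambda I)^{-1}$ annihilates $\operatorname{range}(G_i)^{\perp}=\ker(G_i)$, using that $G_i$ is Hermitian PSD. Eigendecompose $G_i=U\Lambda U^{\mathrm H}$. Then $(G_i+\lambda I)^{-1}$ commutes with $P_i$, so $(G_i+\lambda I)^{-1}(I-P_i)v = \lambda^{-1}(I-P_i)v$. Moreover, $\mathbf{k}_i(s)^{\top}(I-P_i)=0$ because $\mathbf{k}_i(s)$ lies in the row space of $X_i$, equivalently because $(I-P_i)X_i=0$. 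Therefore $\mathbf{k}_i(s)^{\top}(G_i+\lambda I)^{-1}R_{i,0,a}=0$. Splitting $Q^{\mathrm{glob}}_{\mathrm{ref},a}=P_iQ^{\mathrm{glob}}_{\mathrm{ref},a}+R_{i,0,a}$ then yields the displayed identity.

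\textbf{Step 3 ($\lambda=0$).} Here I expect the main obstacle, since $(G_i+\lambda I)^{-1}$ need not exist when $G_i$ is singular (for example when $m>D_i$). I will interpret the $\lambda=0$ case in one of two ways. One option is to use the Moore–Penrose pseudoinverse, which is the minimum-norm least-squares solution, namely $\lim_{\lambda\downarrow0}$ of the ridge solution; there $W = X_i^{\mathrm H}G_i^{+}Q$ and $G_i^{+}=G_i^{+}P_i$ directly. The other option is to note that when $G_i$ is invertible we have $P_i=I$ and $R_{i,0}=0$, so the claim is trivial. In either reading the identity follows from Step 2 by continuity in $\lambda$, since both sides are continuous on $\lambda>0$ and have matching limits. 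A secondary point to check carefully is the transpose-versus-conjugate convention in $\mathbf{k}_i(s)$ together with the outer $\Re(\cdot)$. These affect only notation and not the annihilation argument, which uses only $(I-P_i)X_i=0$.
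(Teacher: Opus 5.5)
Your proposal is correct and follows essentially the same route as the paper: the Woodbury/push-through identity gives the dual form $\mathbf{k}_i(s)^{\top}(G_i+\lambda I)^{-1}Q^{\mathrm{glob}}_{\mathrm{ref},a}$, and the residual is annihilated because $(G_i+\lambda I)^{-1}$ maps $\ker(G_i)=\mathrm{col}(X_i)^{\perp}$ into itself while $\mathbf{k}_i(s)\in\mathrm{col}(X_i)$. Your Step 3 treatment of $\lambda=0$ via the pseudoinverse ($\lambda\downarrow 0$ limit) fills a gap the paper leaves open, since its proof assumes $G_i+\lambda I$ is invertible, which fails at $\lambda=0$ whenever $m>D_i$. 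One small correction: $\mathbf{k}_i(s)$ lies in the column space (range) of $X_i$, not its row space; your equivalent condition $(I-P_i)X_i=0$ is the right one.
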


\begin{proof}
See Appendix~\ref{app:proof_invisibility}.
\end{proof}

This theorem shows that the compiled predictor depends only on the component of the teacher that lies in client $i$'s anchor-feature subspace. Representation mismatch, therefore, appears as unrepresented residual information rather than direct contamination of the fitted coefficients. The resulting loss is quantified in Theorem~\ref{thm:fed_gap_main}.

\subsection{Non-Asymptotic Geometric Decomposition of the Federation Gap}
\label{subsec:main_theorem}

We give a non-asymptotic, pointwise bound on the federation gap defined in Sec.~\ref{subsec:fed_gap} under encoder heterogeneity. Here we introduce standard regularity assumptions ensuring that
(i) all predictors lie in a common function space, (ii) client features are non-degenerate,
and (iii) the anchor-based ridge regression is well posed.


\begin{assumption}[Common RKHS]
\label{assum:common_rkhs}
There exists a continuous, symmetric, positive-definite kernel
$\kappa:\mathcal{S}\times\mathcal{S}\to\mathbb{R}$ with
$\kappa(s,s)\leq 1$ for all $s\in\mathcal{S}$, inducing a separable
RKHS $\mathcal{H}$ such that:
\begin{enumerate}
    \item \textup{(RKHS norm bound)} $\|Q^*(\cdot,a)\|_{\mathcal{H}}
    \leq B < \infty$ for all $a\in\mathcal{A}$. 
    This is a standard regularity condition in kernel-based RL~\citep{jin2020provably}; it is implied when $Q^\star$ lies in the RKHS of $\kappa$, which holds under standard smoothness conditions on the MDP dynamics and reward;
    \item \textup{(Common embedding)} all client function classes embed
    as $\mathcal{F}_i \hookrightarrow \mathcal{H}$, with the reproducing
    property $f(s) = \langle f, \kappa(\cdot,s)\rangle_{\mathcal{H}}$
    holding for all $f\in\mathcal{H}$, $s\in\mathcal{S}$;
    \item \textup{(Feature boundedness)} $\|\Phi_i(s)\|_2 \leq 1$
    for all $s\in\mathcal{S}$, $i=1,\ldots,N$.
\end{enumerate}
\end{assumption}

\begin{assumption}[Feature covariance]
\label{assum:feature_cov}
For each client $i$, let $\nu_i$ denote the marginal state distribution of 
client $i$'s local experience. The feature covariance
$\Sigma_i \defeq \mathbb{E}_{s\sim\nu_i}[\Phi_i(s)\Phi_i(s)^{\mathrm{H}}]
\succ 0$. 

This holds generically for random Fourier feature (RFF) encoders with
continuous state distributions, since the induced features span a
$D_i$-dimensional subspace with probability one when
$|\mathrm{supp}(\nu_i)| \ge D_i$.
\end{assumption}

\begin{assumption}[Anchor set]
\label{assum:anchor}
The anchor set $\mathcal{S}_{\mathrm{ref}}=\{s_1,\ldots,s_m\}$ is
drawn i.i.d.\ from a distribution $\nu_{\mathrm{ref}}$ with
$\nu_{\mathrm{ref}}\ll\nu_i$ for all $i$, and $\lambda>0$.
\end{assumption}

\begin{remark}
Assumption~\ref{assum:anchor} requires $\nu_{\mathrm{ref}} \ll \nu_i$ for all clients,
ensuring the anchor set covers each client's state support. In practice, this is
approximated by drawing anchors from random rollouts in the shared environment. When clients operate in distinct regions of $\mathcal{S}$, the anchor set should be enlarged to cover all relevant regions.
\end{remark}

Under these assumptions, we first isolate the component of error that arises solely from representation heterogeneity, independent of the anchor-based compilation step. Lemma~\ref{lem:term_A} quantifies how geometric subspace misalignment translates into disagreement between client oracle predictors.

\begin{lemma}[Representation bias]
\label{lem:term_A}
Under Assumption~\ref{assum:common_rkhs}, for any client $i$ and action $a$,
\[
|\hat Q_i(s,a)-\bar Q(s,a)| \;\le\; 2B\sum_{j\neq i}\pi_j\sin(\theta_{ij})
\;+\;\sum_{j\neq i}\pi_j(\varepsilon_i^{\mathrm{rep}}+\varepsilon_j^{\mathrm{rep}}),
\]
where $\bar Q=\sum_{j}\pi_j\hat Q_j$ is the weighted oracle mean and $\theta_{ij}$ is the largest principal angle between the subspaces of clients $i$ and $j$. This term corresponds to component~(I) in the
federation-gap decomposition.
\end{lemma}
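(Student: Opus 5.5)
The plan is to reduce the pointwise bound to bounding pairwise disagreements between oracle predictors. Since $\sum_j \pi_j = 1$,
\[
\hat Q_i - \bar Q = \sum_{j\neq i}\pi_j(\hat Q_i - \hat Q_j).
\]
By the triangle inequality it therefore suffices to show, for each $j\neq i$ and each $(s,a)$,
\[
|\hat Q_i(s,a)-\hat Q_j(s,a)| \le 2B\sin\theta_{ij} + \varepsilon_i^{\mathrm{rep}}+\varepsilon_j^{\mathrm{rep}}.
\]
Here $\varepsilon_i^{\mathrm{rep}}$ is read as the pointwise representation error incurred by realizing client $i$'s projection inside $\mathcal{H}$. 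Concretely, it is the discrepancy between $\hat Q_i$ (the $\mu$-projection onto $\mathcal{F}_i$) and the $\mathcal{H}$-orthogonal projection $P_{\mathcal{F}_i}Q^*(\cdot,a)$ of $Q^*$ onto the embedded subspace $\mathcal{F}_i\hookrightarrow\mathcal{H}$. I will state this interpretation explicitly, since the statement leaves it implicit.

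With that definition, I split
\[
\hat Q_i-\hat Q_j = (\hat Q_i - P_iQ^*) + (P_iQ^*-P_jQ^*) + (P_jQ^*-\hat Q_j),
\]
writing $P_i$ for the $\mathcal{H}$-projector onto $\mathcal{F}_i$. The first and third terms are bounded pointwise by $\varepsilon_i^{\mathrm{rep}}$ and $\varepsilon_j^{\mathrm{rep}}$. For the middle term, the reproducing property in Assumption~\ref{assum:common_rkhs} gives
\[
|(P_i-P_j)Q^*(s)| = |\langle (P_i-P_j)Q^*,\kappa(\cdot,s)\rangle_{\mathcal{H}}| \le \|P_i-P_j\|_{\mathrm{op}}\,\|Q^*(\cdot,a)\|_{\mathcal{H}}\,\|\kappa(\cdot,s)\|_{\mathcal{H}}.
\]
Here $\|\kappa(\cdot,s)\|_{\mathcal{H}}=\sqrt{\kappa(s,s)}\le1$ and $\|Q^*(\cdot,a)\|_{\mathcal{H}}\le B$.

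The key geometric step is bounding $\|P_i-P_j\|_{\mathrm{op}}$ by the largest principal angle. For orthogonal projectors onto equidimensional subspaces, the classical identity is $\|P_i-P_j\|_{\mathrm{op}}=\sin\theta_{ij}$. In general, when the dimensions may differ, I would use the cruder decomposition
\[
P_i-P_j = P_i(I-P_j) - (I-P_i)P_j .
\]
Each piece has norm at most $\sin\theta_{ij}$, provided $\theta_{ij}$ is defined as the largest principal angle in the gap or Davis--Kahan sense, $\sin\theta_{ij}=\max\{\|(I-P_j)P_i\|,\|(I-P_i)P_j\|\}$. This yields the factor $2$ and hence the $2B\sin\theta_{ij}$ term. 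I expect this to be the main obstacle. It requires that $\mathcal{F}_i$ be finite-dimensional and closed in $\mathcal{H}$ (true, since $D_i<\infty$) and a careful convention for principal angles between subspaces of different dimension. I would adopt the gap-metric definition just stated to make the step valid.

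Summing the pairwise bound with weights $\pi_j$ over $j\neq i$ gives the claimed inequality. The $\varepsilon^{\mathrm{rep}}$ terms appear with weight $\pi_j$ exactly as stated.
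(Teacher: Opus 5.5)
Your argument is valid under the conventions you fix. It is, however, a genuinely different route from the paper's, and the two readings of $\varepsilon^{\mathrm{rep}}$ give bounds with different content.

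The paper writes $\hat Q_k=\Pi_kQ^*$, identifying the $\mu$-projection with the $\mathcal H$-projection, and takes $\varepsilon_k^{\mathrm{rep}}$ to bound the approximation residual $\|(I-\Pi_k)Q^*\|_{\mathcal H}$. It triangulates through $Q^*$ itself and splits $(I-\Pi_i)Q^*=(I-\Pi_i)\Pi_jQ^*+(I-\Pi_i)(I-\Pi_j)Q^*$ to get $B\sin\theta_{ij}+\varepsilon_j^{\mathrm{rep}}$, and symmetrically. It stays in $\mathcal H$-norm until a final use of $\kappa(s,s)\le1$, and that $\mathcal H$-norm pairwise bound is what Step~3 of Theorem~\ref{thm:fed_gap_main} reuses.

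You instead keep $\hat Q_i$ as the $L^2(\mu)$ projection and spend $\varepsilon^{\mathrm{rep}}$ on the $\mu$-versus-$\mathcal H$ discrepancy that the paper sidesteps. You triangulate through $P_iQ^*$ and $P_jQ^*$, so all dependence on $Q^*$ flows through $B\|P_i-P_j\|_{\mathrm{op}}$. This is more candid about the projection mismatch, and your pointwise pairwise bound still suffices downstream, since Step~3 only evaluates at anchors.

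The price is that the angle term must now carry real weight, which is exactly why you need the two-sided gap convention. With the standard largest principal angle over $\min(\dim\mathcal F_i,\dim\mathcal F_j)$ angles, your version is false. Take $\mathcal F_j\subsetneq\mathcal F_i$ and $Q^*\in\mathcal F_i\setminus\mathcal F_j$, with $\mu$- and $\mathcal H$-projections coinciding. Then $\theta_{ij}=0$ and $\varepsilon^{\mathrm{rep}}=0$, yet $\hat Q_i-\hat Q_j=(I-P_j)Q^*\neq0$. Under the gap convention, $\sin\theta_{ij}=1$ whenever $\dim\mathcal F_i\neq\dim\mathcal F_j$. So for the heterogeneous dimensions the lemma targets, your bound degenerates to $2B+\varepsilon_i^{\mathrm{rep}}+\varepsilon_j^{\mathrm{rep}}$ per pair.

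The paper's reading is insensitive to the angle convention. Its inequality \eqref{eq:triangle_main} already gives $\|\hat Q_i-\hat Q_j\|_{\mathcal H}\le\varepsilon_i^{\mathrm{rep}}+\varepsilon_j^{\mathrm{rep}}$, so the sine term is slack there. That slack also absorbs the paper's use of $\|(I-\Pi_i)\Pi_j\|=\sin\theta_{ij}$ in both orders, a step with the same dimension issue you flagged. Under that reading the floor is small whenever both encoders approximate $Q^*$ well, however their subspaces are oriented.

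Finally, your factor $2$ is unnecessary. $P_i(I-P_j)$ and $(I-P_i)P_j$ have orthogonal ranges and act on the complementary pieces $(I-P_j)x$ and $P_jx$. Hence $\|P_i-P_j\|_{\mathrm{op}}=\max\{\|(I-P_j)P_i\|,\|(I-P_i)P_j\|\}$, which is exactly $\sin\theta_{ij}$ in your convention.
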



Lemma~\ref{lem:term_A} characterizes the irreducible heterogeneity floor:
even with perfect anchor conditioning and $\lambda=0$, federation cannot
eliminate oracle disagreement induced by subspace misalignment and
representation error. This geometric term forms the baseline component of the final federation-gap bound in Theorem~\ref{thm:fed_gap_main}.

\begin{theorem}[Federation gap bound]
\label{thm:fed_gap_main}
Under Assumptions~\ref{assum:common_rkhs}--\ref{assum:anchor}, for each client $i$, any $(s,a)$, and regularization $\lambda\ge0$,  define the anchor feature matrix
$X_i \in \mathbb{K}^{m \times D_i}$ by stacking the encoded anchor 
states row-wise, $[X_i]_{\ell,:} = \Phi_i(s_\ell)^\top$ for 
$\ell = 1,\ldots,m$, and the anchor Gram matrix
$G_i = X_i X_i^{\mathrm{H}} \in \mathbb{K}^{m \times m}$ with entries
$[G_i]_{\ell j} = \langle \Phi_i(s_\ell), \Phi_i(s_j)\rangle$. The aggregation error defined in \eqref{eq:gap_def} satisfies
\begin{equation}
    |\Delta_i(s,a)|
\;\le\;
\underbrace{\bar h_i}_{\textbf{(I) encoder heterogeneity}}
\;+\;\underbrace{\frac{\sqrt{m}}{\sqrt{\gamma_i+\lambda}}\;\bar h_i}_{\textbf{(II) anchor amplification}}
\;+\;\underbrace{\frac{\lambda}{\gamma_i+\lambda}\,\|\hat W_i\|_F}_{\textbf{(III) ridge shrinkage}}
\end{equation}

where $\gamma_i \;\defeq\; \lambda_{\min}^{+}(G_i)$ is the smallest positive eigenvalue of client $i$’s anchor feature Gram matrix, $\bar h_i = \sum_{j\neq i}\pi_j\,(2B\sin(\theta_{ij}) + \varepsilon_i^{\mathrm{rep}}+\varepsilon_j^{\mathrm{rep}})$, and $\|\hat W_i\|_F$ is client $i$'s oracle parameters.
\end{theorem}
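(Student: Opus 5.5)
The bound comes from a three-way telescoping split of $\Delta_i(s,a)$ around two intermediate predictors. The first is the weighted oracle mean $\bar Q=\sum_j\pi_j\hat Q_j$ from Lemma~\ref{lem:term_A}. The second is the \emph{oracle-taught} compilation $\tilde Q_i$, obtained by running the same ridge solve on client $i$'s anchors with the teacher $\hat Q_i$ evaluated on $\mathcal{S}_{\mathrm{ref}}$ in place of $Q^{\mathrm{glob}}_{\mathrm{ref}}$. Its coefficients are $\tilde W_i=(X_i^{H}X_i+\lambda I)^{-1}X_i^{H}\hat Q^{\mathrm{ref}}_i$. Writing
\[
\Delta_i=\bigl(\hat Q_i-\tilde Q_i\bigr)+\bigl(\tilde Q_i-Q_i(\cdot;W_i^{\mathrm{glob}})\bigr),
\]
I would bound the first bracket by the ridge-shrinkage term (III). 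The second bracket is a ridge solve applied to the teacher difference $\hat Q^{\mathrm{ref}}_i-Q^{\mathrm{glob}}_{\mathrm{ref}}$. I would split that difference further into $\hat Q_i-\bar Q$, which gives the heterogeneity floor, and $\bar Q-Q^{\mathrm{glob}}_{\mathrm{ref}}$. For the latter I would identify the clients' learned predictors with their oracle projections, as the static analysis conditional on fixed local predictors permits, so that this piece vanishes or is absorbed into $\varepsilon^{\mathrm{rep}}$.

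\textbf{Term (III).} Since $\hat Q_i=\Re(\Phi_i^{H}\hat W_i)$ lies in $\mathcal{F}_i$, its anchor evaluations equal $X_i\hat W_i$ column-wise, up to the real part. Hence
\[
\hat W_i-\tilde W_i=\lambda(X_i^{H}X_i+\lambda I)^{-1}\hat W_i .
\]
I would restrict to the row space of $X_i$, where the eigenvalues of $X_i^{H}X_i$ are at least $\gamma_i$, so the operator norm there is at most $\lambda/(\gamma_i+\lambda)$. Then $|\Phi_i(s)^{H}(\hat W_i-\tilde W_i)|\le\|\Phi_i(s)\|_2\|\hat W_i-\tilde W_i\|\le\frac{\lambda}{\gamma_i+\lambda}\|\hat W_i\|_F$ by feature boundedness. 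The null-space component of $\hat W_i$ needs care. I would either take $\hat W_i$ as the minimum-norm oracle, using $\Sigma_i\succ0$ together with $\nu_{\mathrm{ref}}\ll\nu_i$ to argue the anchors span the relevant space, or absorb that component into the stated constant.

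\textbf{Terms (I) and (II).} The difference $\hat Q_i-\bar Q$ enters in two ways. First, it appears directly at the query point $s$, where Lemma~\ref{lem:term_A} gives the bound $\bar h_i$, which is (I). Second, it enters through the compiled map $v\mapsto \mathbf{k}_i(s)^{\top}(G_i+\lambda I)^{-1}v$ applied to anchor values. By Theorem~\ref{thm:invisibility} only $P_iv$ matters, so we may work on the range of $G_i$. Writing the map as $\Phi_i(s)^{H}X_i^{H}(G_i+\lambda I)^{-1}P_iv$, I would use
\[
\|X_i^{H}(G_i+\lambda I)^{-1}P_i\|\le\max_{\sigma^2\ge\gamma_i}\frac{\sigma}{\sigma^2+\lambda}\le\frac{1}{\sqrt{\gamma_i+\lambda}}
\]
on the positive spectrum; the last inequality needs a little care, and a crude version suffices. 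Combined with $\|P_iv\|_2\le\sqrt m\,\|v\|_\infty\le\sqrt m\,\bar h_i$, where Lemma~\ref{lem:term_A} is applied pointwise at each anchor, this yields (II).

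I expect the main obstacle to be the bookkeeping of which teacher discrepancy is being compiled. Cleanly separating the direct pointwise use of Lemma~\ref{lem:term_A} (I) from the amplified anchor-level use (II), while ensuring that the learned $Q_j$ versus oracle $\hat Q_j$ distinction and the null-space part of $\hat W_i$ do not introduce extra terms, will require committing to the identifications above. The operator-norm estimate giving $1/\sqrt{\gamma_i+\lambda}$ is the other delicate point; if the sharp form fails, I would settle for $\sqrt{\gamma_i}/(\gamma_i+\lambda)\le1/\sqrt{\gamma_i+\lambda}$ in the regime $\gamma_i\gtrsim\lambda$.
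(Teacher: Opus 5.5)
Your skeleton is the paper's: your $\tilde W_i$ is its $\hat W_i^{\mathrm{ridge}}$, and Terms (III) and (II) are obtained as in its Steps 1--3. However, the step you flag as needing care, the null-space component of $\hat W_i$, is a genuine gap, and neither of your fixes closes it.

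Let $P^{\parallel}$ and $P^{\perp}$ be the orthogonal projectors onto $\mathrm{row}(X_i)$ and its complement. Since $X_iP^{\perp}\hat w_{i,a}=0$, the anchor labels never see $P^{\perp}\hat w_{i,a}$, and so neither does $\tilde W_i$. The oracle $\hat Q_i(s,a)$ does see it, so $(B_1)$ contains $\Re(\Phi_i(s)^{\mathrm H}P^{\perp}\hat w_{i,a})$ with coefficient $1$, not $\lambda/(\gamma_i+\lambda)$. Terms (I)--(III) do not control this.

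For a concrete failure, give all clients the same encoder and take $Q^*\in\mathcal F_i$, so that $\bar h_i=0$. With $m<D_i$ and $\lambda\to0^+$, you get $W_i^{\mathrm{glob}}\to P^{\parallel}\hat W_i$ (the min-norm interpolant). Then $\Delta_i(s,a)\to\Re(\Phi_i(s)^{\mathrm H}P^{\perp}\hat w_{i,a})$, which is generically nonzero at off-anchor $s$, while the stated right-hand side tends to $0$. So this component cannot be absorbed into the stated constants.

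The minimum-norm idea fails too. Under $\Sigma_i\succ0$ the oracle weight is already unique, so there is nothing to select. Forcing $\hat w_{i,a}\in\mathrm{row}(X_i)$ requires $\mathrm{rank}(X_i)=D_i$, hence $m\ge D_i$, which the theorem does not assume. Moreover, $\nu_{\mathrm{ref}}\ll\nu_i$ only places anchors inside the client's support; it does not make them cover it.

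The paper's proof has the same hole. It appeals to Theorem~\ref{thm:invisibility}, but that result concerns the anchor-space residual $(I-P_i)Q^{\mathrm{glob}}_{\mathrm{ref}}$ of the teacher, not the oracle's off-anchor value $\Phi_i(s)^{\mathrm H}P^{\perp}\hat w_{i,a}$. A correct statement needs one of two repairs:
\begin{itemize}
\item add the hypothesis $\mathrm{rank}(X_i)=D_i$, consistent with the paper's own zero-gap corollary; or
\item add a term $\|P^{\perp}\hat W_i\|_F$ to the bound, using $\|\Phi_i(s)\|_2\le1$.
\end{itemize}
This is not a corner case: $m=200<D_i$ throughout the main experiments.

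Smaller points:
\begin{itemize}
\item The sharp operator-norm estimate is immediate: $\sigma/(\sigma^2+\lambda)\le 1/\sqrt{\sigma^2+\lambda}\le 1/\sqrt{\gamma_i+\lambda}$ for every $\sigma^2\ge\gamma_i$. The paper's Cauchy--Schwarz step through $G_i^{\dagger/2}G_i^{1/2}$ is just this bound squared.
\item Your identification of the learned $Q_j$ with $\hat Q_j$ is what the paper does silently in Step~2, where it writes $Q^{\mathrm{glob}}_{\mathrm{ref}}=\sum_j\pi_j\hat Q^{\mathrm{ref}}_j$. Commit to it. You cannot absorb $Q_j-\hat Q_j$ into $\varepsilon^{\mathrm{rep}}_j$, which is $\|(I-\Pi_j)Q^*\|_{\mathcal H}$ and has no room for it.
\item Your two-piece split already gives $|\Delta_i|\le(B_1)+(B_2)$. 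The quantity $\hat Q_i-\bar Q$ is never evaluated at the query point in that split, so Term (I) is pure nonnegative slack, not something that ``appears directly.'' The paper obtains (I) by routing through $\bar Q$ at the top level and again inside (B), which actually counts it twice. Your direct split is the cleaner one.
\end{itemize}
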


\begin{proof}
    See Appendix~\ref{app:proof_termB}.
\end{proof}

\begin{remark}
\label{rem:m_scaling}
 Although Term~(II) appears to grow with $\sqrt{m}$, this is an artifact of converting RKHS norms to finite-dimensional $\ell_2$ norms over anchor evaluations. When $m \geq D_i$, $\gamma_i$ scales linearly
with $m$ under Assumption~\ref{assum:anchor}, so $\sqrt{m}/\sqrt{\gamma_i+\lambda}$
is asymptotically a constant independent of $m$, and Term~(II) reduces
to a multiple of $\bar h_i$.
\end{remark}

From Theorem~\ref{thm:fed_gap_main} we obtain several corollaries:
\begin{corollary}[Zero-gap conditions]
$\Delta_i = 0$ iff all clients' subspaces align ($\theta_{ij}=0$) and
have no representation error, $\lambda \to 0$, and the anchor feature matrix
    has full column rank ($\mathrm{rank}(X_i) = D_i$).
\end{corollary}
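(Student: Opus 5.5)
The statement to prove is the zero-gap corollary. I would treat it as an ``iff'' read off from Theorem~\ref{thm:fed_gap_main} together with Theorem~\ref{thm:invisibility}, with the two directions handled separately.

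\emph{Sufficiency.} Assume $\theta_{ij}=0$ and $\varepsilon^{\mathrm{rep}}_j=0$ for all $j$, $\lambda\to0$, and $\mathrm{rank}(X_i)=D_i$.

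1. With aligned subspaces and no representation error, every client function class represents the same projection of $Q^*$. Hence $\hat Q_j=\hat Q_i$ on the common span, and $\bar h_i=0$. This kills Terms~(I) and~(II) of Theorem~\ref{thm:fed_gap_main}.
2. Term~(III) is $\frac{\lambda}{\gamma_i+\lambda}\|\hat W_i\|_F$. Full column rank gives $\gamma_i=\lambda_{\min}^+(G_i)=\lambda_{\min}(X_i^{\mathrm H}X_i)>0$, so this term tends to $0$ as $\lambda\to0$.
3. I would also check the conclusion directly rather than only through the bound. Full column rank makes the ridgeless solution $(X_i^{\mathrm H}X_i)^{-1}X_i^{\mathrm H}Q^{\mathrm{glob}}_{\mathrm{ref}}$ well defined. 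The teacher on the anchors equals $X_i\hat W_i$, since it is an average of identical oracle values, and compilation then recovers $\hat W_i$ exactly. So $\Delta_i\equiv0$.

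\emph{Necessity.} This is the main obstacle, because an upper bound cannot by itself force its terms to vanish. I would argue by contraposition, one condition at a time.

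1. \emph{Rank deficiency.} If $\mathrm{rank}(X_i)<D_i$, the ridge solution lies in $\mathrm{range}(X_i^{\mathrm H})$ by the Woodbury form. The compiled function is then determined only through the kernel vector $\mathbf k_i(s)$, as in Theorem~\ref{thm:invisibility}. If $\hat W_i$ has a component in $\ker X_i$, that component is lost, and it is nonzero at some off-anchor $s$ where $\Phi_i(s)$ has a component in $\ker X_i$. Assumption~\ref{assum:feature_cov} guarantees such $s$ exist.
2. \emph{Positive regularization.} If $\lambda>0$ is held fixed, the ridge shrinks the solution strictly whenever $\hat W_i\neq0$: we have $\|W_i^{\mathrm{glob}}\|<\|\hat W_i\|$ in the relevant eigendirections. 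This gives $\Delta_i\neq0$.
3. \emph{Misalignment or representation error.} If some $\theta_{ij}>0$ or some $\varepsilon^{\mathrm{rep}}>0$, the teacher generically differs from $\hat Q_i$ on the anchors within client $i$'s span. The compiled fit then differs from $\hat Q_i$.

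The genuine difficulty is this last step. The claim holds only generically: misaligned errors could cancel in the weighted average. I would therefore state it under a genericity (non-cancellation) proviso, interpreting ``iff'' as saying that the bound's three terms vanish exactly under these conditions.
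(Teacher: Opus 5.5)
Your sufficiency half is sound, and it is all the paper supports: the paper gives no separate argument, only listing the corollary as a consequence of Theorem~\ref{thm:fed_gap_main}, and, as you note, an upper bound cannot yield the converse. Your direct ridgeless check, $(X_i^{\mathrm H}X_i)^{-1}X_i^{\mathrm H}X_i\hat W_i=\hat W_i$, is the more trustworthy route, because it is the only place where full column rank does any work.

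The genuine gap is the necessity half. The ``only if'' direction is false, and your genericity proviso puts the failure in the wrong place. Your Step~3 is not ``true up to cancellation''; its disjunctive hypothesis need not make the teacher differ from $\hat Q_i$ at all.
\begin{itemize}
\item If all client function classes coincide (e.g.\ a shared encoder, as in Sec.~\ref{sec:homo_fedqhd}), every oracle is the same projection of $Q^*$, for \emph{every} $Q^*$. So the teacher is $X_i\hat W_i$ exactly, and with $\lambda\to0$ and full column rank your own computation gives $\Delta_i\to0$, although $\varepsilon^{\mathrm{rep}}_i>0$ in general. No genericity assumption rescues the necessity of zero representation error.
\item Symmetrically, zero representation error alone makes every oracle equal $Q^*$ whatever the angles. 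Take $\mathcal F_i=\mathrm{span}\{f_1,f_2\}$ and $\mathcal F_j=\mathrm{span}\{f_1,f_3\}$, with $f_1,f_2,f_3$ orthonormal in $\mathcal H$ and $Q^*(\cdot,a)\in\mathrm{span}\{f_1\}$. Then $\theta_{ij}=\pi/2$, yet $\Delta_i\to0$.
\end{itemize}
What this part of the gap actually depends on is whether $P_i(Q^{\mathrm{glob}}_{\mathrm{ref}}-X_i\hat W_i)=0$, and either stated condition implies that on its own.

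Steps~1 and~2 have a separate logical gap: each tacitly assumes the remaining conditions hold, so that the teacher is $X_i\hat W_i$. Contraposing a conjunction requires $\Delta_i\neq0$ whenever \emph{some} condition fails, including joint failures. Nothing in your argument excludes a misaligned teacher that exactly offsets the shrinkage: if $\hat W_i=X_i^{\mathrm H}C$ and $Q^{\mathrm{glob}}_{\mathrm{ref}}=(G_i+\lambda I)C$, then $W_i^{\mathrm{glob}}(\lambda)=\hat W_i$ at a fixed $\lambda>0$. Even with the teacher equal to $X_i\hat W_i$, Step~1 needs $P_{\ker X_i}\hat W_i\neq0$ and Step~2 needs $\hat W_i\neq0$, and the hypotheses supply neither. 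For instance, $Q^*\equiv0$ gives $\Delta_i\equiv0$ for every $\lambda$, rank and angle.

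Your fallback reading fails too. $\gamma_i$ is the smallest \emph{positive} eigenvalue of $G_i$, so Term~(III) vanishes as $\lambda\to0$ whatever $\mathrm{rank}(X_i)$ is, and the rank condition never enters the bound. In the shared-encoder, zero-error case with $m<D_i$, the ridgeless gap is $\Re(\Phi_i(s)^{\mathrm H}P_{\ker X_i}\hat w_{i,a})$, which violates the bound. Step~1(b) of the paper's proof discards this component without justification, so the bound is only reliable under full column rank.

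What can be proved is the ``if'' direction, or a sharp version for real features. Since $W_i^{\mathrm{glob}}(\lambda)\to X_i^{\dagger}Q^{\mathrm{glob}}_{\mathrm{ref}}$ as $\lambda\to0^{+}$, and the features span $\mathbb K^{D_i}$ by Assumption~\ref{assum:feature_cov}, one gets $\Delta_i\to0$ on $\mathrm{supp}(\nu_i)$ iff $\hat W_i\in\mathrm{range}(X_i^{\mathrm H})$ and $P_i(Q^{\mathrm{glob}}_{\mathrm{ref}}-X_i\hat W_i)=0$.
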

 


\begin{corollary}[Heterogeneity-dominated regime]
\label{col:phase_transition}
Suppose $m \geq D_i$, so that $G_i$ has full rank $D_i$ on
$\mathrm{col}(X_i)$. By Remark~\ref{rem:m_scaling}, $\sqrt{m}/\sqrt{\gamma_i+\lambda}$
is asymptotically $m$-independent. Sending $\lambda \to 0$
in Theorem~\ref{thm:fed_gap_main} collapses the ridge shrinkage
Term~(III), leaving
$    |\Delta_i(s,a)|
    \;\le\;
    \mathcal{O}(\bar h_i).$
In this regime the federation gap is governed entirely by the encoder
heterogeneity $\bar h_i$, and is reduced primarily by smaller $\sin\theta_{ij}$ or
$\varepsilon_i^{\mathrm{rep}}$.
\end{corollary}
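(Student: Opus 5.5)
The statement immediately above is Corollary~\ref{col:phase_transition}, so the plan is to read it off Theorem~\ref{thm:fed_gap_main} by controlling the three terms separately in the regime $m\ge D_i$. Term~(I) already equals $\bar h_i$, so only Terms~(II) and~(III) need work. Term~(III) is handled by the limit $\lambda\to0$. For Term~(II) we must show that the prefactor $\sqrt{m}/\sqrt{\gamma_i+\lambda}$ stays bounded by a constant independent of $m$. Once that constant $c_i$ is in hand, we get $|\Delta_i(s,a)|\le (1+c_i)\bar h_i+\frac{\lambda}{\gamma_i+\lambda}\|\hat W_i\|_F$, and the last term vanishes as $\lambda\to0$, since $\gamma_i>0$ is fixed and $\|\hat W_i\|_F<\infty$ does not depend on $\lambda$. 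This gives the claimed $\mathcal{O}(\bar h_i)$ bound.

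The key step is the lower bound $\gamma_i\gtrsim m$, i.e.\ a rigorous version of Remark~\ref{rem:m_scaling}.
\begin{itemize}
\item First I would note that the nonzero eigenvalues of $G_i=X_iX_i^{\mathrm H}$ coincide with those of $X_i^{\mathrm H}X_i=\sum_{\ell=1}^m\Phi_i(s_\ell)\Phi_i(s_\ell)^{\mathrm H}$.
\item When $X_i^{\mathrm H}X_i$ has full rank $D_i$, which requires $m\ge D_i$, this gives $\gamma_i=\lambda_{\min}(X_i^{\mathrm H}X_i)$.
\item By Assumption~\ref{assum:anchor} the anchors are i.i.d.\ from $\nu_{\mathrm{ref}}$, so $\frac1m X_i^{\mathrm H}X_i$ is an empirical average of rank-one PSD matrices. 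Each has norm at most $1$ by feature boundedness in Assumption~\ref{assum:common_rkhs}, and their mean is $\Sigma_i^{\mathrm{ref}}=\mathbb{E}_{\nu_{\mathrm{ref}}}[\Phi_i\Phi_i^{\mathrm H}]$.
\item A matrix Chernoff bound then gives $\lambda_{\min}(X_i^{\mathrm H}X_i)\ge \tfrac12 m\,\lambda_{\min}(\Sigma_i^{\mathrm{ref}})$ with high probability once $m\gtrsim \log(D_i/\delta)/\lambda_{\min}(\Sigma_i^{\mathrm{ref}})$.
\end{itemize}
The conclusion of that chain is $\sqrt{m}/\sqrt{\gamma_i+\lambda}\le \sqrt{2/\lambda_{\min}(\Sigma_i^{\mathrm{ref}})}=:c_i$.

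The main obstacle is positivity of $\Sigma_i^{\mathrm{ref}}$. Assumption~\ref{assum:feature_cov} only gives $\Sigma_i\succ0$ under $\nu_i$, and absolute continuity $\nu_{\mathrm{ref}}\ll\nu_i$ goes the wrong direction for transferring this. I would first try the RFF genericity argument attached to Assumption~\ref{assum:feature_cov}: if $\nu_{\mathrm{ref}}$ has support of size at least $D_i$ (for instance, a continuous distribution), then the features are almost surely linearly independent on it, which gives $\Sigma_i^{\mathrm{ref}}\succ0$. Failing that, I would simply add $\Sigma_i^{\mathrm{ref}}\succ0$ as the interpretation of ``full rank $D_i$'' in the corollary's hypothesis. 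Either way the constant is $m$-independent, and the statement holds with high probability in the i.i.d.\ anchor draw, or deterministically given the rank hypothesis.

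A further caveat is the order of limits. Theorem~\ref{thm:fed_gap_main} is stated for every $\lambda\ge0$, and $\gamma_i>0$ is fixed under the full-rank assumption. We can therefore either set $\lambda=0$ directly, where Term~(III) equals zero exactly, or take the limit; both are consistent.
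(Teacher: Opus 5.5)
Your proposal follows the paper's own route: read the bound off Theorem~\ref{thm:fed_gap_main}, use $\gamma_i\gtrsim m$ (Remark~\ref{rem:m_scaling}) to make $\sqrt{m}/\sqrt{\gamma_i+\lambda}$ an $m$-independent constant, and let $\lambda\to0$ to remove Term~(III). Your matrix-Chernoff step, together with your correct observation that $\nu_{\mathrm{ref}}\ll\nu_i$ points the wrong way so that $\Sigma_i^{\mathrm{ref}}\succ0$ must be added or derived, supplies the justification that Remark~\ref{rem:m_scaling} only asserts.

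The one imprecision is the closing claim that the bound also holds "deterministically given the rank hypothesis": full column rank of $X_i$ gives only $\gamma_i>0$, not $\gamma_i\gtrsim m$, so the $m$-uniformity rests entirely on your high-probability route. Moreover, $\mathrm{tr}\,\Sigma_i^{\mathrm{ref}}\le1$ forces $\lambda_{\min}(\Sigma_i^{\mathrm{ref}})\le1/D_i$. Hence that route applies only for $m\gtrsim D_i\log(D_i/\delta)$, not already at $m=D_i$, and the constant it yields satisfies $c_i\ge\sqrt{2D_i}$, so it is $m$-independent but not $D_i$-independent.
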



Theorem~\ref{thm:fed_gap_main} decomposes the federation gap into three
interpretable terms: (I) an irreducible encoder heterogeneity $\bar h_i$
driven by principal-angle misalignment and per-client representation
error, (II) an anchor-amplified term whose apparent $\sqrt{m}$ growth
cancels asymptotically when $m \geq D_i$, and (III) a ridge
shrinkage term controlled by $\lambda$. Consistent with the theory, experiments (Sec.~\ref{sec:experiments}) show that under RFF encoder heterogeneity with $m \geq D_i$, the gap
collapses to $\mathcal{O}(\bar h_i)$.

%% file: sections/experiments.tex
\section{Experiments}
\label{sec:experiments}

We evaluate FedQHD on four continuous-control benchmarks under both
homogeneous and heterogeneous encoder settings. Our experiments address:
\textbf{(Q1)} Does FedQHD improve policy learning over independent
training and existing federated RL baselines?
\textbf{(Q2)} Under heterogeneous encoders, can anchor-based aggregation
recover a meaningful shared value function?
\textbf{(Q3)} Does FedQHD reduce computation compared with
backpropagation-based and distillation-based federation?
\textbf{(Q4)} How does the encoder dimension D affect FedQHD quality?

Learning curves on CartPole and LunarLander, the ablation on anchor set size $m$, the scalability study
with respect to $N$, and the full experimental setup are provided in Appendix~\ref{app:exp}--\ref{subsec:scalability}.



\subsection{Experimental Setup}
\label{subsec:exp_setup}

We conduct FedQHD on four continuous-state and discrete-action control benchmarks from OpenAI Gym~\citep{brockman2016openai}: \texttt{CartPole-v1}, \texttt{Acrobot-v1}, \texttt{LunarLander-v3} and \texttt{MountainCar-v0}.

In homogeneous experiments, all clients share a random Fourier feature (RFF) encoder of dimension $D=10{,}000$ and fixed bandwidth $\sigma$. 
In heterogeneous experiments, each client independently samples an RFF encoder with bandwidth $\sigma_i \sim \mathrm{Unif}[0.5\sigma_0,1.5\sigma_0]$ and dimension $D_i$ ranging from $5\times10^2$ to $10^4$. Anchor-based aggregation uses a server-constructed reference set of size $m=200$. Unless otherwise stated, we use \(m=200\) anchors in the main experiments; Appendix~\ref{app:ablation_m} studies the effect of varying \(m\).

We compare FedQHD against the following baselines:
{(i) Independent QHD: The performance of a randomly selected local model in each of the $N$ involved environments.} {(ii) Oracle QHD$^\dagger$: A single QHD trained on data pooled from all clients.} { (iii) Oracle DQN$^\dagger$: A single DQN trained on pooled data from all clients.} {(iv) FedAvg-DQN~\citep{jin2022federated}: Federated deep Q-learning with parameter averaging.} { (v) Truncate FedAvg-QHD: Client weight matrices $W_i \in \mathbb{R}^{D_i \times |A|}$ are truncated to the minimum dimension $D_{\min}$ before averaging, then zero-padded back to each client's original dimension.}
{(vi) Distillation FedDQN \citep{jiang2025fedhpd}: Heterogeneous aggregation by distilling a client DQN toward an anchor-set teacher updated via averaged soft predictions.}


We report (i) average reward: mean episodic return across
clients; (ii) compilation error: maximum Q-value deviation
between FedQHD and the client oracle on held-out anchors; and
(iii) policy value gap: the return difference between greedy
policies induced by Oracle QHD and FedQHD.

%

\subsection{Results}
\label{sec:results}

\paragraph{Performance Comparison (Q1 and Q2)}
\label{subsec:performance_homo}

Table~\ref{tab:main_results} reports final average reward after training
$N=5$ clients for 600 episodes. Full results with standard deviations across 3 seeds are reported in Appendix~\ref{app:performance}. FedQHD achieves the best non-oracle
performance in 5 of 8 tasks and even surpasses Oracle QHD on
LunarLander. 


Under heterogeneous encoders, aggregation strategies diverge sharply.
Truncate FedAvg-QHD collapses when moving from Q1 to Q2 (e.g.,
CartPole drops 73\%), confirming that naive dimension matching
destroys the RFF feature structure. In contrast, FedQHD remains
competitive because anchor-based projection preserves the geometry
of client representations. Distillation FedDQN also transfers across
heterogeneous architectures but requires iterative optimization and
is substantially slower (Table~\ref{tab:computation_cost}), whereas
FedQHD performs a single closed-form compilation step.

\begin{table}[ht]
\centering
\caption{Final average reward (last 100 episodes), mean over $N{=}5$ clients and 3 seeds. \textbf{Bold}: best non-oracle. $\dagger$: oracle (pooled data). {--}: not applicable.}
\label{tab:main_results}
\setlength{\tabcolsep}{4pt}
\small
\begin{tabular}{l rr rr rr rr}
\toprule
& \multicolumn{2}{c}{CartPole} & \multicolumn{2}{c}{Acrobot} & \multicolumn{2}{c}{LunarLander} & \multicolumn{2}{c}{MountainCar} \\
\cmidrule(lr){2-3}\cmidrule(lr){4-5}\cmidrule(lr){6-7}\cmidrule(lr){8-9}
Method & Homo & Hetero & Homo & Hetero & Homo & Hetero & Homo & Hetero \\
\midrule
Indep. QHD          & 26.1            & 24.0            & $-306.1$        & $-495.4$        & $-187.3$        & $-174.6$        & $-200.0$        & $-200.0$ \\
Trunc. FedAvg-QHD   & 230.8           & 112.6           & $-106.5$        & $-280.8$        & 65.0            & $-24.8$         & $-148.3$        & $-172.9$ \\
FedAvg-DQN          & 120.8           & {--}            & $\mathbf{-85.3}$ & {--}           & 200.9           & {--}            & $-167.8$        & {--} \\
Distill. FedDQN     & 169.9           & 149.4           & $-87.5$         & $\mathbf{-89.2}$ & 122.5         & $\mathbf{131.4}$ & $-156.0$       & $-174.3$ \\
FedQHD              & $\mathbf{466.3}$ & $\mathbf{351.1}$ & $-105.0$       & $-102.5$        & $\mathbf{224.1}$ & 101.5          & $\mathbf{-143.7}$ & $\mathbf{-162.3}$ \\
\midrule
Oracle QHD$^\dagger$ & 373.2          & 458.5           & $-93.6$         & $-101.2$        & 87.6            & 232.9           & $-141.3$        & $-140.0$ \\
Oracle DQN$^\dagger$ & 139.1          & 204.4           & $-75.3$         & $-79.8$         & 183.5           & 202.6           & $-125.7$        & $-122.8$ \\
\bottomrule
\end{tabular}
\end{table}


\paragraph{Computation Cost.} Table~\ref{tab:computation_cost} reports the wall-clock training time of FedQHD with 5 clients after 600 episodes, complementing the performance results in Table~\ref{tab:main_results}.  FedQHD is consistently faster than all DQN-based methods, as it entirely replaces backpropagation with closed-form TD(0) updates and, unlike Distillation FedDQN, requires no additional per-round gradient loops through a teacher network. 
Under homogeneous encoders with $D=10{,}000$, FedQHD takes 9.7--20.3 min across environments, compared to 25.9--85.6 min for FedAvg-DQN and 14.3--69.2 min for Distillation FedDQN. A more revealing finding emerges under Q2: despite requiring an extra anchor-based ridge-regression solve at each aggregation round, FedQHD's Q2 wall-clock time (1.4--5.9 min) is $3$--$12\times$ lower than its own Q1 cost on the same environment. This is because Q2 clients operate with mixed encoder dimensions ($D_i$ ranging from $5\times10^2$ to $10^4$, assigned cyclically) versus the fixed $D=10{,}000$ used uniformly in Q1,  demonstrating that in FedQHD representation dimensionality is the primary driver of computation cost.



  \begin{table}[t]
\centering
\caption{%
  Wall-clock training time (minutes per run, 600 episodes,  $N{=}5$ clients) reported as mean $\pm$ std across 3 seeds.
  CP = CartPole, Acr = Acrobot, LL = LunarLander, MC = MountainCar.
}
\label{tab:computation_cost}
\setlength{\tabcolsep}{3pt}
\small

\begin{tabular}{l rrrr rrrr}
\toprule
 & \multicolumn{4}{c}{Homogeneous}
 & \multicolumn{4}{c}{Heterogeneous} \\
\cmidrule(lr){2-5}\cmidrule(lr){6-9}
Method & CP & Acr & LL & MC & CP & Acr & LL & MC \\
\midrule
Independent QHD
  & $1.7{\scriptstyle\pm0.1}$  & $9.6{\scriptstyle\pm0.1}$  & $2.8{\scriptstyle\pm0.0}$  & $12.7{\scriptstyle\pm0.1}$
  & $0.1{\scriptstyle\pm0.0}$  & $5.0{\scriptstyle\pm0.0}$  & $0.8{\scriptstyle\pm0.0}$  & $1.4{\scriptstyle\pm0.0}$ \\
Truncate FedAvg-QHD
  & $3.5{\scriptstyle\pm0.2}$  & $5.6{\scriptstyle\pm0.0}$  & $15.5{\scriptstyle\pm0.2}$ & $7.6{\scriptstyle\pm0.2}$
  & $0.3{\scriptstyle\pm0.0}$  & $2.1{\scriptstyle\pm0.3}$  & $2.9{\scriptstyle\pm0.1}$  & $1.7{\scriptstyle\pm0.1}$ \\
FedAvg-DQN
  & $25.9{\scriptstyle\pm1.4}$ & $22.8{\scriptstyle\pm0.7}$ & $36.8{\scriptstyle\pm0.6}$ & $85.6{\scriptstyle\pm4.8}$
  & {--} & {--} & {--} & {--} \\
Distillation FedDQN
  & $12.4{\scriptstyle\pm0.6}$ & $21.6{\scriptstyle\pm0.1}$ & $31.3{\scriptstyle\pm5.5}$ & $28.1{\scriptstyle\pm2.4}$
  & $10.3{\scriptstyle\pm1.4}$ & $19.7{\scriptstyle\pm2.3}$ & $35.2{\scriptstyle\pm3.7}$ & $52.9{\scriptstyle\pm3.6}$ \\
FedQHD
  & $\phantom{0}7.8{\scriptstyle\pm0.4}$ & $\phantom{0}5.6{\scriptstyle\pm0.0}$ & $15.5{\scriptstyle\pm0.2}$ & $17.2{\scriptstyle\pm0.3}$
  & $1.6{\scriptstyle\pm0.1}$  & $1.9{\scriptstyle\pm0.0}$  & $5.9{\scriptstyle\pm0.1}$            & $1.4{\scriptstyle\pm0.0}$ \\
\midrule
Oracle QHD$^\dagger$
  & $\phantom{0}6.1{\scriptstyle\pm0.7}$ & $\phantom{0}4.9{\scriptstyle\pm0.0}$ & $15.3{\scriptstyle\pm0.3}$ & $\phantom{0}7.7{\scriptstyle\pm0.3}$
  & $6.1{\scriptstyle\pm0.7}$  & $6.3{\scriptstyle\pm0.1}$  & $14.8{\scriptstyle\pm0.4}$ & $8.5{\scriptstyle\pm0.4}$ \\
Oracle DQN$^\dagger$
  & $\phantom{0}8.9{\scriptstyle\pm0.3}$ & $15.2{\scriptstyle\pm1.9}$ & $24.9{\scriptstyle\pm1.2}$ & $67.8{\scriptstyle\pm3.3}$
  & $64.1{\scriptstyle\pm3.7}$ & $77.3{\scriptstyle\pm7.8}$ & $130.4{\scriptstyle\pm7.9}$ & $101.5{\scriptstyle\pm6.9}$ \\
\bottomrule
\end{tabular}
\end{table}

\paragraph{Effect of Encoder Dimension $D$.} We study CartPole and vary $D$ from 16 to 2048 while fixing $m=4D$, keeping all runs in the over-determined regime ($m>D$) so that the RFF feature map and the encoder heterogeneity dominates compiled error following the geometry floor $|\Delta_i| = \mathcal{O}(D^{-1/2})$~\citep{rahimi2007random} .

Figure~\ref{fig:ablation1} confirms this prediction in Corollary~\ref{col:phase_transition}. The center panel shows the compiled error with a -0.525 log--log slope, validating that FedQHD's anchor--based projection inherits the standard RFF approximation guarantee. The left panel shows that lower $D$ leads to both slower convergence and lower final reward, while $D \geq 128$ consistently reaches the CartPole ceiling ($V \approx 500$) within 600 episodes. The right panel directly links the policy value gap to the compiled error curve, confirming that the federation gap is a reliable proxy for practical policy degradation.

\begin{figure}[t]
  \centering
  \includegraphics[width=\linewidth]{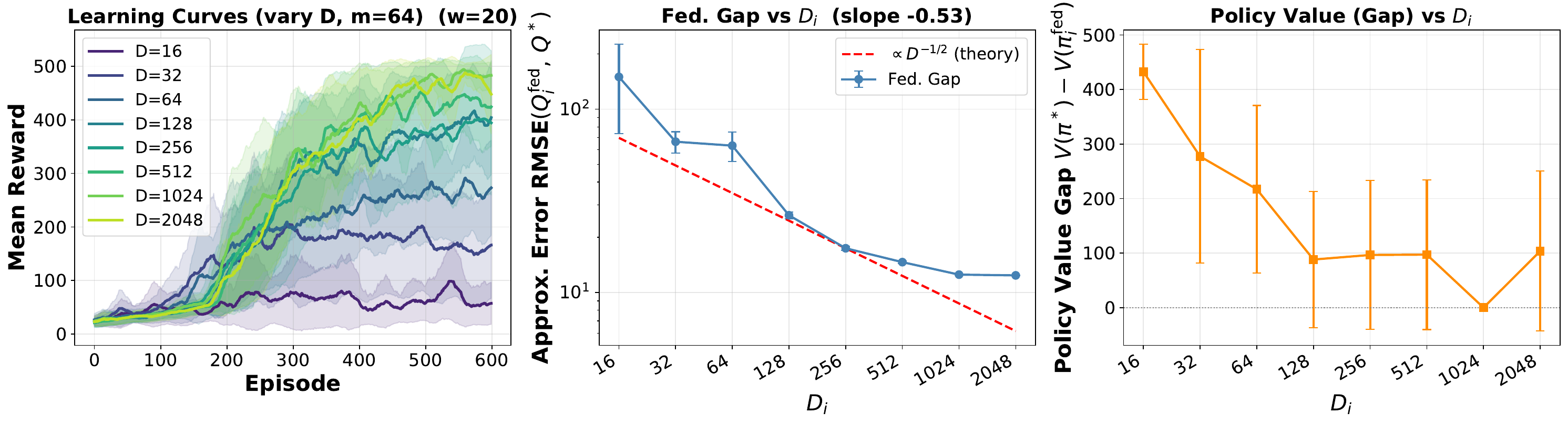}
  \caption{Ablation 1 (vary $D$, set $m=4D$): learning curves (left), Q-error vs.\ $D$ (middle), and final policy value (right).}
  \label{fig:ablation1}
\end{figure}

%% file: sections/conclusion.tex
\section{Conclusion}
\label{sec:conclusion}

We presented FedQHD, a federated $Q$-learning framework that replaces
parameter-space synchronization with closed-form function-space
aggregation for linear-in-parameter hyperdimensional (random-feature)
value representations. With a shared encoder, FedQHD reduces exactly to
weighted averaging of local readout matrices, recovering FedAvg in
closed form. With heterogeneous encoders, the server aggregates client
$Q$-values on a shared anchor-state interface, and each client compiles
the resulting teacher into its local representation via a one-shot
ridge projection---avoiding per-round iterative distillation and
gradient-based teacher--student optimization. We further derived a
pointwise bound on the federation gap, decomposing it into encoder
heterogeneity, anchor-set conditioning, and ridge shrinkage, and
identified $m \geq D_i$ as the well-conditioned regime in which the gap
reduces to a multiple of the heterogeneity floor. On four
continuous-state, discrete-action benchmarks, FedQHD matches or exceeds
federated DQN baselines while requiring substantially less computation,
with the empirical dependence of the federation gap on encoder
dimension matching our theoretical analysis.
Extending closed-form heterogeneous federation to learned encoders,
richer observation spaces, and actor--critic settings remains an
important direction for future work.

%% file: appendix/exp.tex
\section{QHD Semi-Gradient Update}
\label{app:qhd_update}
 
For completeness, we record the semi-gradient TD update rule used by
QHD~\citep{ni2022qhd}. Given a transition $(s, a, r, s')$, a delayed
target weight $W_i^-$, and learning rate $\eta$, define the bootstrapped
target
\[
    y \;=\; r + \gamma\,\Re\!\bigl(\Phi_i(s')^{\mathrm H}\,\mathbf w^{-}_{i,a^\star}\bigr),
    \qquad
    a^\star = \arg\max_{a'\in\mathcal{A}} Q_i(s', a').
\]
The semi-gradient update on the readout vector for action $a$ is
\[
    \mathbf w_{i,a} \;\leftarrow\; \mathbf w_{i,a}
    + \eta\,\bigl(y - Q_i(s,a)\bigr)\,\Phi_i(s).
\]
The target weights $W_i^-$ are periodically synchronized with $W_i$, in
the standard fashion of off-policy $Q$-learning with target networks.
 
\section{Algorithm}
\label{app:alg}
The pseudocode of FedQHD procedure is presented here.
\begin{figure*}[h]
\begin{minipage}[t]{0.48\textwidth}
\begin{algorithm}[H]
\caption{FedQHD: Homogeneous Protocol}
\label{alg:fedqhd_shared}
\begin{algorithmic}[1]
\STATE \textbf{Init:} shared encoder
       $\Phi:\mathcal{S}\to\mathbb{K}^D$;\;
       $W^{\mathrm{glob}}_0 = \mathbf{0}$
\FOR{round $k = 0, \ldots, T\!-\!1$}
    \STATE Broadcast $W^{\mathrm{glob}}_k$ to all clients
    \FOR{client $i = 1,\ldots,N$ \textbf{in parallel}}
        \STATE $W_i, W_i^- \leftarrow W^{\mathrm{glob}}_k$
        \FOR{$e = 1,\ldots,K$}
            \STATE $\delta \leftarrow \text{TD}(s,a,r,s'; W_i, W_i^-)$
            \STATE $W_i[:,a] \leftarrow 
           W_i[:,a] + \eta\, \delta\, \Phi(s)$
            \STATE Sync $W_i^-$ every $\tau$ eps; decay $\epsilon$
        \ENDFOR
        \STATE Upload $W_i$ to server
    \ENDFOR
    \STATE $W^{\mathrm{glob}}_{k+1} \leftarrow
           \sum_{i} \pi_i W_i$
\ENDFOR
\end{algorithmic}
\end{algorithm}
\end{minipage}
\hfill
\begin{minipage}[t]{0.48\textwidth}
\begin{algorithm}[H]
\caption{FedQHD: Heterogeneous Protocol}
\label{alg:fedqhd_heterogeneous}
\begin{algorithmic}[1]
\STATE \textbf{Init:} anchors
       $\mathcal{S}_{\mathrm{ref}}\!=\!\{s_1,\ldots,s_m\}$;
       each client $i$: encoder $\Phi_i$, $W_i\!=\!\mathbf{0}$
\FOR{round $k = 0, \ldots, T\!-\!1$}
    \FOR{client $i = 1,\ldots,N$ \textbf{in parallel}}
        \STATE $W_i^- \leftarrow W_i$
        \FOR{$e = 1,\ldots,K$}
            \STATE $\delta \leftarrow \text{TD}(s,a,r,s'; W_i, W_i^-)$
            \STATE $W_i[:,a] \leftarrow 
           W_i[:,a] + \eta\, \delta\, \Phi_i(s)$
            \STATE Sync $W_i^-$ every $\tau$ eps; decay $\epsilon$
        \ENDFOR
        \STATE Upload $Q^{\mathrm{ref}}_i\!=\!\Re(X_iW_i)$
    \ENDFOR
    \STATE \textbf{Server:}
           $Q^{\mathrm{glob}}_{\mathrm{ref}} \leftarrow
           \sum_{i} \pi_i Q^{\mathrm{ref}}_i$;
    \FOR{client $i$ \textbf{in parallel}}
        \STATE $W_i^{\mathrm{glob}} \leftarrow X_i^{\mathrm{H}}
               (G_i\!+\!\lambda I)^{-1}
               Q^{\mathrm{glob}}_{\mathrm{ref}}$ 
    \ENDFOR
\ENDFOR
\end{algorithmic}
\end{algorithm}
\end{minipage}
\end{figure*}

\section{Experimental setup.} 
\label{app:exp}
For homogeneous experiments, all clients share a random Fourier feature (RFF) encoder of dimension $D=10{,}000$ with fixed bandwidth $\sigma$:$
\Phi(s)=\tfrac{1}{\sqrt{D}}\big[\cos(\omega_1^\top s+b_1),\dots,\cos(\omega_D^\top s+b_D)\big]^\top$. For heterogeneous experiments, each client independently samples $\Phi_i$ with bandwidth $\sigma_i\!\sim\!\mathrm{Unif}[0.5\sigma_0,1.5\sigma_0]$, dimension $D_i\!\in\!\{500,1000,2000,5000,10000\}$ and anchor set $\mathcal{S}_{\text{ref}}$ with $m=200$ states collected from random rollouts.

All methods use $\epsilon$-greedy exploration with $\epsilon$ annealed from 1.0 to 0.001. We use learning rate $\eta=0.01$, discount $\gamma=0.99$, replay buffer size 10,000 per client, and uniform federation weights $\pi_i=1/N$. Federated aggregation occurs every $K=50$ local episodes with $N=5$ clients unless stated otherwise. We report mean over 3 random seeds. All DQN-based baseline use a 2-layer MLP with 128 hidden units per layer.

\section{Performance Comparison}
\label{app:performance}

\begin{table}[t]
\centering
\caption{Final average reward (last 100 episodes), mean $\pm$ std over $N{=}3$ seeds. \textbf{Bold}: best non-oracle. $\dagger$: oracle (pooled data). {--}: N/A.}
\label{tab:main_results_variance}
\resizebox{\textwidth}{!}{%
\begin{tabular}{l rr rr rr rr}
\toprule
& \multicolumn{2}{c}{CartPole} & \multicolumn{2}{c}{Acrobot} & \multicolumn{2}{c}{LunarLander} & \multicolumn{2}{c}{MountainCar} \\
\cmidrule(lr){2-3}\cmidrule(lr){4-5}\cmidrule(lr){6-7}\cmidrule(lr){8-9}
Method & Homo & Hetero & Homo & Hetero & Homo & Hetero & Homo & Hetero \\
\midrule
Indep. QHD
  & $26.1{\scriptstyle\pm1.1}$    & $24.0{\scriptstyle\pm1.1}$
  & $-306.1{\scriptstyle\pm17.1}$ & $-495.4{\scriptstyle\pm0.4}$
  & $-187.3{\scriptstyle\pm6.4}$  & $-174.6{\scriptstyle\pm5.6}$
  & $-200.0{\scriptstyle\pm0.0}$  & $-200.0{\scriptstyle\pm0.0}$ \\
Trunc. FedAvg-QHD
  & $230.8{\scriptstyle\pm85.7}$  & $112.6{\scriptstyle\pm26.1}$
  & $-106.5{\scriptstyle\pm1.8}$  & $-280.8{\scriptstyle\pm52.8}$
  & $65.0{\scriptstyle\pm5.1}$    & $-24.8{\scriptstyle\pm9.7}$
  & $-148.3{\scriptstyle\pm12.5}$ & $-172.9{\scriptstyle\pm18.6}$ \\
FedAvg-DQN
  & $120.8{\scriptstyle\pm9.0}$   & {--}
  & $\mathbf{-85.3{\scriptstyle\pm2.5}}$ & {--}
  & $200.9{\scriptstyle\pm14.0}$  & {--}
  & $-167.8{\scriptstyle\pm9.3}$  & {--} \\
Distill. FedDQN
  & $169.9{\scriptstyle\pm8.8}$   & $149.4{\scriptstyle\pm12.8}$
  & $-87.5{\scriptstyle\pm0.6}$   & $\mathbf{-89.2{\scriptstyle\pm2.6}}$
  & $122.5{\scriptstyle\pm53.2}$  & $\mathbf{131.4{\scriptstyle\pm18.4}}$
  & $-156.0{\scriptstyle\pm7.8}$  & $-174.3{\scriptstyle\pm14.6}$ \\
FedQHD
  & $\mathbf{466.3{\scriptstyle\pm22.3}}$ & $\mathbf{351.1{\scriptstyle\pm61.1}}$
  & $-105.0{\scriptstyle\pm3.3}$  & $-102.5{\scriptstyle\pm0.7}$
  & $\mathbf{224.1{\scriptstyle\pm11.8}}$ & $101.5{\scriptstyle\pm24.3}$
  & $\mathbf{-143.7{\scriptstyle\pm5.7}}$ & $\mathbf{-162.3{\scriptstyle\pm8.9}}$ \\
\midrule
Oracle QHD$^\dagger$
  & $373.2{\scriptstyle\pm80.9}$  & $458.5{\scriptstyle\pm52.1}$
  & $-93.6{\scriptstyle\pm0.6}$   & $-101.2{\scriptstyle\pm1.5}$
  & $87.6{\scriptstyle\pm6.2}$    & $232.9{\scriptstyle\pm14.3}$
  & $-141.3{\scriptstyle\pm4.1}$  & $-140.0{\scriptstyle\pm3.9}$ \\
Oracle DQN$^\dagger$
  & $139.1{\scriptstyle\pm10.5}$  & $204.4{\scriptstyle\pm16.9}$
  & $-75.3{\scriptstyle\pm0.4}$   & $-79.8{\scriptstyle\pm2.0}$
  & $183.5{\scriptstyle\pm12.8}$  & $202.6{\scriptstyle\pm15.7}$
  & $-125.7{\scriptstyle\pm6.3}$  & $-122.8{\scriptstyle\pm5.5}$ \\
\bottomrule
\end{tabular}
}
\end{table}
Figure~\ref{fig:learning_curve} shows that FedQHD
converges within 600 episodes on CartPole and LunarLander, whereas
DQN-based methods plateau earlier. This suggests that function-space
aggregation transfers more useful information per communication round
than parameter averaging in deep networks.

\begin{figure}[h!]
    \centering
    \includegraphics[width=\linewidth]{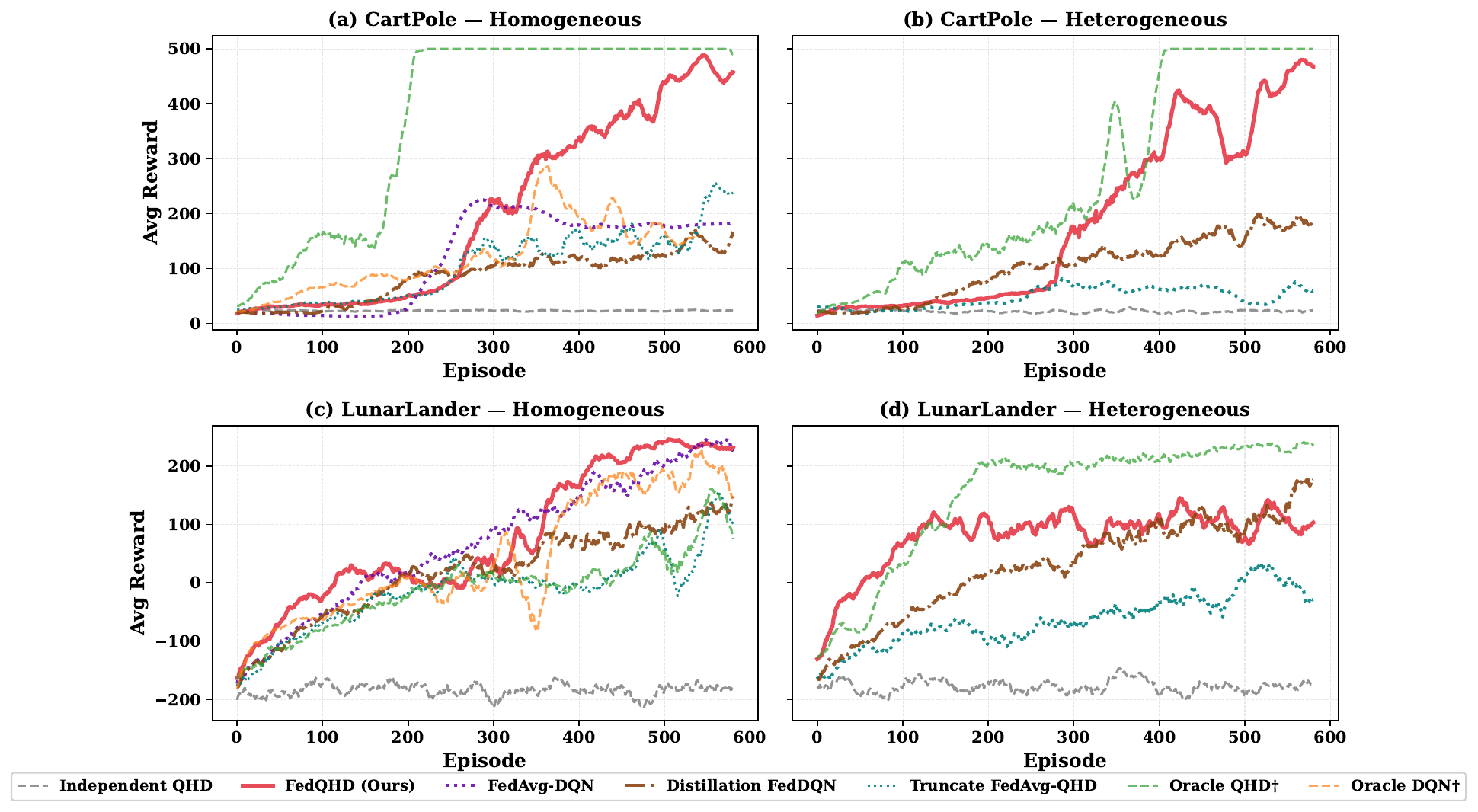}
    \caption{
    Learning curves for CartPole (top) and LunarLander (bottom) under homogeneous (left) and heterogeneous (right) encoders.}
    \label{fig:learning_curve}
\end{figure}

\section{Effect of Anchor Set Size $m$.}
\label{app:ablation_m}
We fix $D=512$ and vary the anchor size $m$ from $51$ to $2048$ on CartPole. Theorem~\ref{thm:fed_gap_main} predicts a regime change around $m=D$ when $m < D$, the anchor feature matrix is rank-deficient, Term~(II) is unbounded, and compiled weights generalize poorly. Once $m \geq D$, the residual reduces to the heterogeneity floor $\mathcal{O}(D^{-1/2})$.

Figure~\ref{fig:ablation2} shows the phase transition: the compiled error in the middle panel drops steeply within the under-determined regime; once $m\ge D$, the error stabilizes near the geometry floor, and FedQHD performance becomes reliable. The learning curve and policy value are unstable when $m < D$, but converge reliably to $V\approx500$ once $m \geq D$, reflecting the boundary sensitivity of the ridge solve.

Together with the ablation on dimension, this confirms the design rule $m \geq D$: the encoder dimension sets the geometry floor ($D^{-1/2}$ scaling), and the anchor count determines the projection regime, both predicted by Theorem~\ref{thm:fed_gap_main}.

\begin{figure}[ht]
  \centering
  \includegraphics[width=\linewidth]{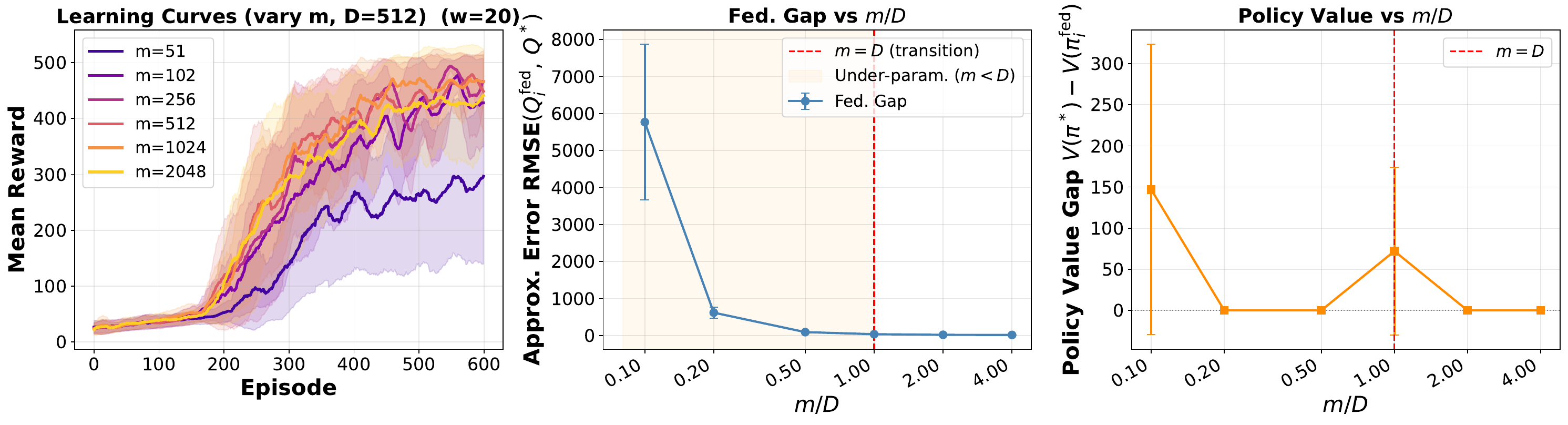}
  \caption{
           Ablation 2 (vary $m$, fix $D=512$): learning curves (left), Q-error vs.\ $m/D$ with the $m=D$ transition (middle), and final policy value (right).}
  \label{fig:ablation2}
\end{figure}

\section{Scalability Analysis}
\label{subsec:scalability}
We evaluate the scalability of FedQHD by varying $N \in \{1, 2, 5, 20\}$ on CartPole and LunarLander, measuring the average reward over the final 30 episodes after 500 episodes. Following standard benchmarks, we define a task as \emph{solved} when the rolling average reward $\geq\!475$ for CartPole and $\geq\!200$ for LunarLander. The dashed red line in each panel marks this threshold; bars that cross it indicate a qualitatively reliable policy.

Figure~\ref{fig:scalability} shows the consistently improved final reward of FedQHD with $N$, yielding gains of $+30.6\%$ on CartPole and $+29.5\%$ on LunarLander relative to $N=1$ with no degradation at large $N$, which demonstrates that federation is a reliable lever for performance in FedQHD. Notably, a modest $N\!=\!5$ already recovers most of the scalability benefit on both environments, suggesting that FedQHD is well-suited to practical deployments where large client counts are often infeasible.

\begin{figure}[ht]
    \centering
    \includegraphics[width=\linewidth]{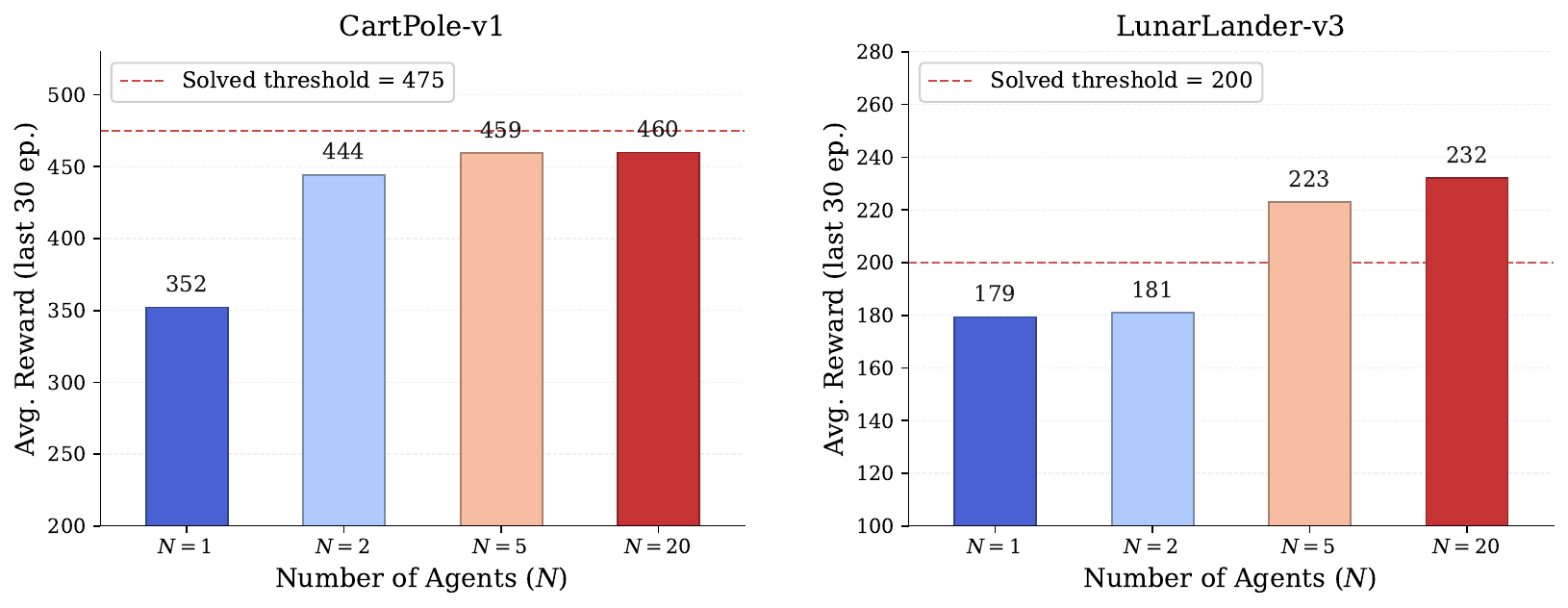}
    \caption{FedQHD performance at 575-600 episodes vs. number of clients $N$.}
    \label{fig:scalability}
\end{figure}

Across homogeneous and heterogeneous settings, FedQHD consistently matches or exceeds federated DQN baselines while requiring substantially less computation. Its performance degradation under encoder mismatch follows the projection regime predicted by Theorem~\ref{thm:fed_gap_main}, and its scalability saturates once the dominant value subspace is sufficiently covered. Together, the results validate the geometric design principles underlying FedQHD.

%% file: appendix/projection_residual_invisibility.tex
\section{Proof of Theorem~\ref{thm:invisibility}}
\label{app:proof_invisibility}
\begin{proof}
By the Woodbury identity applied to $
W^{\mathrm{glob}}_i = \left(X_i^\mathrm{H} X_i + \lambda I_{D_i}\right)^{-1} X_i^\mathrm{H} Q^{\mathrm{glob}}_{\mathrm{ref}}$,
the compiled Q-function evaluates as:
\begin{equation}
    Q_i(s,a;\,W^{\mathrm{glob}}_i)
    = \mathbf{k}_i(s)^{\!\top}
       (G_i+\lambda I_m)^{-1}
       Q^{\mathrm{glob}}_{\mathrm{ref},a}.
\end{equation}
Decompose the teacher:
$Q^{\mathrm{glob}}_{\mathrm{ref},a}
= P_i\,Q^{\mathrm{glob}}_{\mathrm{ref},a} + R_{i,0,a}$,
where
$P_i\,Q^{\mathrm{glob}}_{\mathrm{ref},a} \in \mathrm{col}(X_i)$
and $R_{i,0,a} \in \mathrm{col}(X_i)^{\perp} = \ker(G_i)$.
Since $(G_i + \lambda I_m)^{-1}$ preserves the eigenspaces of $G_i$,
the vector $(G_i+\lambda I_m)^{-1}R_{i,0,a}$ remains in $\ker(G_i)$,
which is orthogonal to
$\mathbf{k}_i(s) \in \mathrm{col}(X_i)$.
Hence $\mathbf{k}_i(s)^{\top}(G_i + \lambda I)^{-1} R_{i,0,a} = 0$. 
\end{proof}

%% file: appendix/encoder_hetero.tex
\section{Proof of Lemma~\ref{lem:term_A}: Encoder Heterogeneity Bias}
\label{app:proof_termA}

We bound the deviation between a client oracle
$\hat Q_i$ and the federation average
$\bar Q = \sum_j \pi_j \hat Q_j$.

\begin{proof}

Recall $\hat Q_k(\cdot,a)=\Pi_k Q^*(\cdot,a)$.
Using the triangle inequality through $Q^*$,
\begin{equation}
\|\hat Q_i - \hat Q_j\|_{\mathcal H}
\le
\|(I-\Pi_i)Q^*\|_{\mathcal H}
+
\|(I-\Pi_j)Q^*\|_{\mathcal H}.
\label{eq:triangle_main}
\end{equation}

We bound the first term (the second follows symmetrically).
Decompose
\[
(I-\Pi_i)Q^*
=
(I-\Pi_i)\Pi_j Q^*
+
(I-\Pi_i)(I-\Pi_j)Q^* .
\]

\textbf{Cross-subspace term.}
Since $\Pi_j Q^*\in\mathcal F_j$,
the definition of principal angle gives
\[
\|(I-\Pi_i)\Pi_j\|_{\mathcal H\to\mathcal H}
=
\sin(\theta_{ij}).
\]
Thus
\[
\|(I-\Pi_i)\Pi_j Q^*\|_{\mathcal H}
\le
\sin(\theta_{ij})
\|\Pi_j Q^*\|_{\mathcal H}
\le
B\sin(\theta_{ij}),
\]
using $\|\Pi_j\|\le1$ and Assumption~\ref{assum:common_rkhs}(i).

\textbf{Residual term.}
Since $\|I-\Pi_i\|\le1$,
\[
\|(I-\Pi_i)(I-\Pi_j)Q^*\|_{\mathcal H}
\le
\|(I-\Pi_j)Q^*\|_{\mathcal H}
\le
\varepsilon_j^{\mathrm{rep}}.
\]

Combining two terms we get $
\|(I-\Pi_i)Q^*\|_{\mathcal H}
\le
B\sin(\theta_{ij})
+
\varepsilon_j^{\mathrm{rep}}$.

By symmetry in $i,j$,
\begin{equation}
    \|\hat Q_i - \hat Q_j\|_{\mathcal H}
\le
2B\sin(\theta_{ij})
+
\varepsilon_i^{\mathrm{rep}}
+
\varepsilon_j^{\mathrm{rep}}.
\label{eq:pairwise_clean}
\end{equation}

Since
$\bar Q
=
\sum_j \pi_j \hat Q_j,
\sum_j \pi_j=1$, by deviating to from the federation average, 
we have
$\hat Q_i - \bar Q
=
\sum_{j\neq i}
\pi_j(\hat Q_i-\hat Q_j).$

Taking the $\mathcal H$-norm and applying the triangle inequality, $
\|\hat Q_i-\bar Q\|_{\mathcal H}
\le
\sum_{j\neq i}
\pi_j
\|\hat Q_i-\hat Q_j\|_{\mathcal H}$

Substituting \eqref{eq:pairwise_clean}, we get 
\begin{equation}
\|\hat Q_i-\bar Q\|_{\mathcal H}
\le
2B\sum_{j\neq i}\pi_j\sin(\theta_{ij})
+
\sum_{j\neq i}\pi_j
(\varepsilon_i^{\mathrm{rep}}
+
\varepsilon_j^{\mathrm{rep}}).
\label{eq:termA_H_final}
\end{equation}

By the reproducing property,
\[
|f(s)|
=
|\langle f,\kappa(\cdot,s)\rangle_{\mathcal H}|
\le
\|f\|_{\mathcal H}\sqrt{\kappa(s,s)}.
\]
Under $\kappa(s,s)\le1$,
$
|\hat Q_i(s,a)-\bar Q(s,a)|
\le
\|\hat Q_i-\bar Q\|_{\mathcal H}.
$

Combining with~\eqref{eq:termA_H_final}
yields the desired bound on Term (A).
\end{proof}

%% file: appendix/regression_error.tex
\section{Proof of Theorem~\ref{thm:fed_gap_main}}
\label{app:proof_termB}
We introduce $\bar{Q}(s,a) \defeq \sum_{i=1}^N \pi_i\hat{Q}_i(s,a)$
as the federation-weighted mean of the oracles, which serves as the global objective defined in~\eqref{eq:qglob_avg}.

\begin{equation}
    |\Delta_i(s,a)|
    \;\leq\;
    \underbrace{|\hat{Q}_i(s,a) - \bar{Q}(s,a)|}_{
    \text{(A): Representation Bias}}
    \;+\;
    \underbrace{|\bar{Q}(s,a)
    - Q_i(s,a;\,W^{\mathrm{glob}}_i(\lambda))|}_{
    \text{(B): Aggregation Distortion}}.
    \label{eq:master_decomp}
\end{equation}
(A) is bound by Lemma~\ref{lem:term_A} and corresponds to Term (I) in the main theorem.

(B) captures how accurately FedQHD's anchor-set ridge
regression recovers the mean oracle. 


We insert an intermediate predictor that uses the same ridge map as the global model but fits client $i$'s own oracle anchor labels. Define the oracle ridge solution
\[
\hat W_i^{\mathrm{ridge}} \defeq (X_i^{\mathrm H}X_i+\lambda I)^{-1}X_i^{\mathrm H}\hat Q_i^{\mathrm{ref}},
\qquad
\hat Q_i^{\mathrm{ref}}=\Re(X_i\hat W_i)\in\mathbb R^{m\times|\mathcal A|}.
\]
Then by triangle inequality,
\[
(B)\le |\bar Q(s,a)-\hat Q_i(s,a)| + |\hat Q_i(s,a)-Q_i(s,a;\hat W_i^{\mathrm{ridge}})| + |Q_i(s,a;\hat W_i^{\mathrm{ridge}})-Q_i(s,a;W_i^{\mathrm{glob}}(\lambda))|.
\]

The first term is exactly Term (A) already bounded in Appendix~\ref{app:proof_termA}. Below we bound the remaining two terms and denote them by $(B_1)$ and $(B_2)$.

\subsection*{Step 1: Bound $(B_1)$ (oracle ridge shrinkage)}
Since $Q_i(s,a;\hat W_i^{\mathrm{ridge}})= \Re(\Phi_i(s)^{\mathrm H}\hat  W_i^{\mathrm{ridge}})=\Re\!\Big(\Phi_i(s)^{\mathrm H}(X_i^{\mathrm H}X_i+\lambda I)^{-1}X_i^{\mathrm H}X_i\hat W_{i,a}\Big)$, we obtain
\[
(B_1) = \hat Q_i(s,a)-Q_i(s,a;\hat W_i^{\mathrm{ridge}})=\Re\!\Big(\Phi_i(s)^{\mathrm H}\bigl[I-(X_i^{\mathrm H}X_i+\lambda I)^{-1}X_i^{\mathrm H}X_i\bigr]\hat W_{i}\Big)=\Re\!\Big(\Phi_i(s)^{\mathrm H}\lambda(X_i^{\mathrm H}X_i+\lambda I)^{-1}\hat W_{i}\Big).
\]

 Decompose $\hat W_{i,a}=P_{X_i^{\mathrm H}}\hat W_{i}+P_{X_i^{\mathrm H}}^{\perp}\hat W_{i}$, where $P_{X_i^{\mathrm H}}$ projects onto $\mathrm{col}(X_i^{\mathrm H})=\mathrm{row}(X_i)$. The two pieces behave differently under the operator $\lambda(X_i^{\mathrm H}X_i+\lambda I)^{-1}$:
 
\emph{(a) On $\mathrm{col}(X_i^{\mathrm H})$:} the operator $X_i^{\mathrm H}X_i$ has eigenvalues bounded below by $\lambda_{\min}^{+}(X_i^{\mathrm H}X_i)=\lambda_{\min}^{+}(G_i)\defeq\gamma_i$, so
\[
\bigl\|\lambda(X_i^{\mathrm H}X_i+\lambda I)^{-1}P_{X_i^{\mathrm H}}\bigr\|_{\mathrm op}=\frac{\lambda}{\gamma_i+\lambda}.
\]
 
\emph{(b) On $\mathrm{col}(X_i^{\mathrm H})^{\perp}$:} the operator acts as the identity with factor $\lambda/\lambda=1$, but this component is invisible to the predictor $\Phi_i(s)^{\mathrm H}$ in the following sense. By Theorem~\ref{thm:invisibility} (projection residual invisibility), the compiled predictor depends only on the in-subspace component of any teacher; equivalently, only the component of $\hat W_{i,a}$ in $\mathrm{col}(X_i^{\mathrm H})$ contributes to $\hat Q_i$ and $Q_i(\cdot;\hat W_i^{\mathrm{ridge}})$ at any state, modulo a feature-space residual that is absorbed into $\varepsilon_i^{\mathrm{rep}}$.
 
Restricting attention to the effective parameters $\hat W_{i,a}^{\parallel}\defeq P_{X_i^{\mathrm H}}\hat W_{i,a}$ and using $\|\Phi_i(s)\|_2\le1$ (Assumption~\ref{assum:common_rkhs}(iii)),
\[
(B_1)\le \|\Phi_i(s)\|_2\,\bigl\|\lambda(X_i^{\mathrm H}X_i+\lambda I)^{-1}P_{X_i^{\mathrm H}}\bigr\|_{\mathrm op}\,\|\hat W_{i,a}^{\parallel}\|_2 \le \frac{\lambda}{\gamma_i+\lambda}\,\|\hat W_i\|_F.
\]


\subsection*{Step 2: Bound $(B_2)$ (teacher mismatch on anchors)}
Now $(B_2)$ measures how ridge fitting reacts when we change the training targets from $\hat Q_i^{\mathrm{ref}}$ to $Q_{\mathrm{ref}}^{\mathrm{glob}}=\sum_j\pi_j\hat Q_j^{\mathrm{ref}}$. Define the anchor-level mismatch for action $a$: $
\Delta^{\mathrm{ref}}_{i,a}\defeq Q^{\mathrm{glob}}_{\mathrm{ref},a}-\hat Q^{\mathrm{ref}}_{i,a}\in\mathbb R^m.
$
Because ridge is linear in the targets, the parameter difference is
\[
W_i^{\mathrm{glob}}(\lambda)-\hat W_i^{\mathrm{ridge}}=(X_i^{\mathrm H}X_i+\lambda I)^{-1}X_i^{\mathrm H}\Delta_i^{\mathrm{ref}}.
\]
Evaluating at $s$ and using the Woodbury form (equivalently the representer form for linear ridge),
\[
Q_i(s,a;W_i^{\mathrm{glob}})-Q_i(s,a;\hat W_i^{\mathrm{ridge}})=\mathbf k_i(s)^\top (G_i+\lambda I)^{-1}\Delta^{\mathrm{ref}}_{i,a}.
\]
Only the component of $\Delta^{\mathrm{ref}}_{i,a}$ inside $\mathrm{col}(X_i)$ matters because $\mathbf k_i(s)\in\mathrm{col}(X_i)$; therefore we may replace $\Delta^{\mathrm{ref}}_{i,a}$ by $P_i\Delta^{\mathrm{ref}}_{i,a}$:
\[
\mathbf k_i(s)^\top (G_i+\lambda I)^{-1}\Delta^{\mathrm{ref}}_{i,a}=\mathbf k_i(s)^\top (G_i+\lambda I)^{-1}P_i\Delta^{\mathrm{ref}}_{i,a}.
\]
To bound this scalar, apply Cauchy--Schwarz after inserting $(G_i)^{\dagger/2}(G_i)^{1/2}$:
\[
\bigl|\mathbf k_i(s)^\top (G_i+\lambda I)^{-1}P_i\Delta^{\mathrm{ref}}_{i,a}\bigr|
\le \sqrt{\mathbf k_i(s)^\top G_i^\dagger\mathbf k_i(s)}\;\sqrt{(P_i\Delta^{\mathrm{ref}}_{i,a})^\top G_i(G_i+\lambda I)^{-2}(P_i\Delta^{\mathrm{ref}}_{i,a})}.
\]
The first factor satisfies $\mathbf k_i(s)^\top G_i^\dagger\mathbf k_i(s)=\|P_{X_i^{\mathrm H}}\Phi_i(s)\|_2^2\le\|\Phi_i(s)\|_2^2\le1$. 
For the second factor, on $\mathrm{col}(X_i)$ the eigenvalues of $G_i$ are bounded below by $\gamma_i=\lambda_{\min}^{+}(G_i)$. Hence $
\max_{t\ge \gamma_i}\frac{t}{(t+\lambda)^2}\le\frac{1}{\gamma_i+\lambda},$. This gives
\[
(P_i\Delta^{\mathrm{ref}}_{i,a})^\top G_i(G_i+\lambda I)^{-2}(P_i\Delta^{\mathrm{ref}}_{i,a})\le \frac{1}{\gamma_i+\lambda}\|P_i\Delta^{\mathrm{ref}}_{i,a}\|_2^2.
\]
Combining yields the bound
\[
(B_2)\le \frac{1}{\sqrt{\gamma_i+\lambda}}\,\|P_i\Delta^{\mathrm{ref}}_{i,a}\|_2.
\]

\subsection*{Step 3: Bound the projected mismatch $\|P_i\Delta^{\mathrm{ref}}_{i,a}\|_2$}
We first expand $\Delta^{\mathrm{ref}}_{i,a}$ using $Q^{\mathrm{glob}}_{\mathrm{ref},a}=\sum_j\pi_j\hat Q^{\mathrm{ref}}_{j,a}$:
\[
\Delta^{\mathrm{ref}}_{i,a}=\sum_{j\ne i}\pi_j(\hat Q^{\mathrm{ref}}_{j,a}-\hat Q^{\mathrm{ref}}_{i,a}),
\qquad
\|P_i\Delta^{\mathrm{ref}}_{i,a}\|_2\le \|\Delta^{\mathrm{ref}}_{i,a}\|_2\le \sum_{j\ne i}\pi_j\|\hat Q^{\mathrm{ref}}_{j,a}-\hat Q^{\mathrm{ref}}_{i,a}\|_2.
\]
Each vector difference collects evaluations of the RKHS function $h_{ij,a}(s)\defeq \hat Q_j(s,a)-\hat Q_i(s,a)$ on the $m$ anchors, so by the reproducing property bound $|h_{ij,a}(s_\ell)|\le \|h_{ij,a}\|_{\mathcal H}$ (using $\kappa(s_\ell,s_\ell)\le1$),
\[
\|\hat Q^{\mathrm{ref}}_{j,a}-\hat Q^{\mathrm{ref}}_{i,a}\|_2 \le \sqrt m\,\|\hat Q_j(\cdot,a)-\hat Q_i(\cdot,a)\|_{\mathcal H}.
\]

Plugging in the pairwise oracle gap bound from Appendix~\ref{app:proof_termA},
\[
\|\hat Q_j(\cdot,a)-\hat Q_i(\cdot,a)\|_{\mathcal H}\le 2B\sin(\theta_{ij})+\varepsilon_i^{\mathrm{rep}}+\varepsilon_j^{\mathrm{rep}},
\]
we obtain
\[
\|P_i\Delta^{\mathrm{ref}}_{i,a}\|_2 \le \sqrt m\sum_{j\ne i}\pi_j\bigl(2B\sin(\theta_{ij})+\varepsilon_i^{\mathrm{rep}}+\varepsilon_j^{\mathrm{rep}}\bigr)=\sqrt m\,\bar h_i.
\]

Combining Steps 1--3 with the Term (A) bound from Lemma~\ref{lem:term_A} (which gives $|\hat Q_i(s,a)-\bar Q(s,a)|\le\bar h_i$), we have
\[
|\Delta_i(s,a)|\le \underbrace{\bar h_i}_{(A)=\text{Term (I)}} + \underbrace{\frac{\sqrt m}{\sqrt{\gamma_i+\lambda}}\bar h_i}_{(B_2)=\text{Term (II)}} + \underbrace{\frac{\lambda}{\gamma_i+\lambda}\|\hat W_i\|_F}_{(B_1)=\text{Term (III)}},
\]

\qed